\documentclass{article} %
\usepackage{iclr2027_conference,times}
\usepackage[T1]{fontenc}
\usepackage{microtype}
\usepackage{amsmath,amssymb,amsthm}

\newtheorem{theorem}{Theorem}[section]
\newtheorem{proposition}[theorem]{Proposition}
\newtheorem{corollary}[theorem]{Corollary}
\usepackage{enumitem}

\usepackage[hidelinks]{hyperref}
\usepackage{amsmath,amsfonts,bm}

\def\eqref#1{equation~\ref{#1}}
\def\1{\bm{1}}

\DeclareMathAlphabet{\mathsfit}{\encodingdefault}{\sfdefault}{m}{sl}
\SetMathAlphabet{\mathsfit}{bold}{\encodingdefault}{\sfdefault}{bx}{n}

\usepackage{hyperref}
\usepackage{url}

\usepackage{graphicx}       %
\usepackage{booktabs}       %
\usepackage{bbding}         %
\usepackage{makecell}       %
\usepackage{multirow}       %
\usepackage{diagbox}        %
\usepackage{array}          %
\usepackage[table]{xcolor}  %
\usepackage{capt-of}        %
\usepackage{longtable}      %

\definecolor{OPDBlue}{HTML}{3E78A0}
\definecolor{OPDLightBlue}{HTML}{E7F2FA}
\definecolor{TableHeader}{HTML}{F5F7F9}
\newcommand{\bst}[1]{\textbf{#1}}
\newcommand{\subbst}[1]{\underline{#1}}
\newcommand{\caseaction}[4][white]{\cellcolor{#1}#2 & \cellcolor{#1}{\sffamily #3} & \cellcolor{#1}#4}

\title{From Imitation to Reward Discovery:\\On-Policy Warmup for Agentic RL}

\author{{\fontsize{9.5}{11.5}\selectfont\bfseries Yitong Qiao$^{1,2,}$\thanks{Work done during Yitong's internship at Ant Group.}\hspace{0.2em},
Tiantian He$^{2}$,
Lei Liu$^{1,2,}$\thanks{Corresponding authors: Lei Liu and Zhixuan Chu.}\hspace{0.2em},
Yue Shen$^{2}$,
Jian Wang$^{2}$,
Jinjie Gu$^{2}$,
Zhixuan Chu$^{1,\dagger}$}\\
{\normalfont\normalsize $^1$Zhejiang University \quad $^2$Ant Healthcare, Ant Group}\\
{\normalfont\normalsize
\href{mailto:qiaoyt@zju.edu.cn}{\textcolor{blue!55!black}{\texttt{qiaoyt@zju.edu.cn}}};
\href{mailto:liulei1497@gmail.com}{\textcolor{blue!55!black}{\texttt{liulei1497@gmail.com}}};
\href{mailto:zhixuanchu@zju.edu.cn}{\textcolor{blue!55!black}{\texttt{zhixuanchu@zju.edu.cn}}}}
}

\iclrfinalcopy
\hypersetup{
  pdftitle={From Imitation to Reward Discovery: On-Policy Warmup for Agentic RL},
  pdfauthor={Yitong Qiao; Tiantian He; Lei Liu; Yue Shen; Jian Wang; Jinjie Gu; Zhixuan Chu}
}
\usepackage{eso-pic}
\begin{document}
\AddToShipoutPictureFG*{%
  \AtPageUpperLeft{%
    \put(\LenToUnit{108bp},\LenToUnit{-78bp}){%
      \includegraphics[height=17bp]{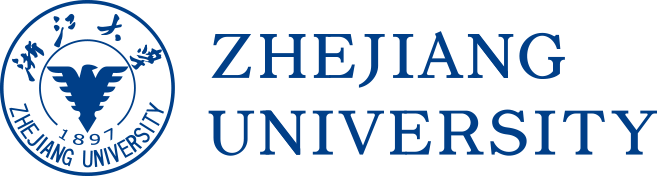}\hspace{18bp}%
      \includegraphics[height=17bp]{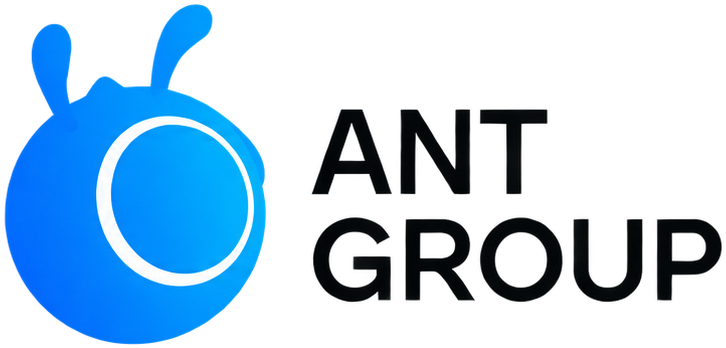}%
    }%
  }%
}
\raggedbottom
\addtocontents{toc}{\protect\setcounter{tocdepth}{-1}}
{\setlength{\tabcolsep}{4.5pt}\maketitle}
\lhead{\parbox{\textwidth}{\footnotesize From Imitation to Reward Discovery: On-Policy Warmup for Agentic RL}}
\setlength{\headheight}{22pt}

\begin{abstract}
Reinforcement learning with a verifiable reward (RLVR) offers a scalable approach to training language-model agents, yet sparse outcome rewards can leave early training with little signal for policy improvement. We identify an \textbf{On-Policy Acceleration Phenomenon}: in our main comparisons, RLVR initialized with on-policy distillation reaches high performance earlier in training and achieves both higher average performance during subsequent RLVR and higher final performance than the alternative baselines. Motivated by this observation, we study \textbf{On-Policy Warmup (OPW)}, a teacher-guided stage in which the student trains with teacher supervision on its own interaction trajectories before transitioning to RLVR. Unlike imitation on fixed teacher-generated trajectories, OPW targets states induced by the student’s own decisions, including imperfect actions and recovery situations. We provide a theoretical explanation by connecting on-policy reverse-KL distillation to trajectory-level distribution matching. Under a competent teacher and sufficiently small population distillation loss, this connection yields a lower bound on initial verifier success and a corresponding bound on reward-discovery complexity. For group-relative RLVR, we further characterize when increased success probability produces more reward-informative groups. Together, our findings support on-policy distillation as an effective warmup for agentic RLVR and identify initial reward discovery as a mechanism that can contribute to the observed acceleration.
\end{abstract}

\begin{figure}[!htbp]
\centering
\includegraphics[width=0.949\textwidth]{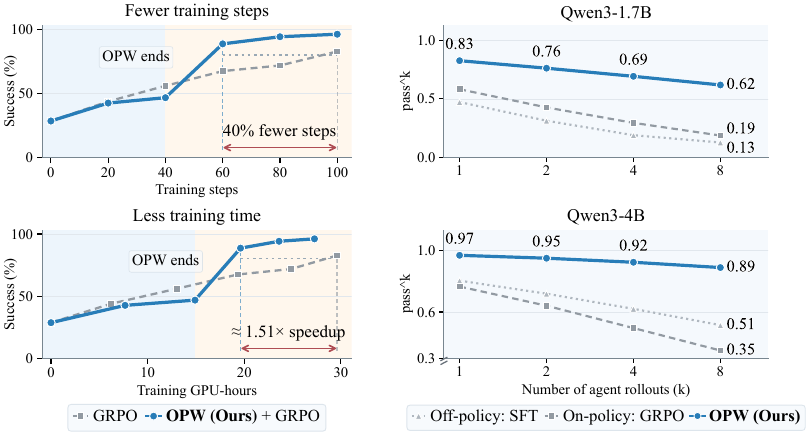}
\caption{\textbf{OPW accelerates GRPO and improves repeated task success in ALFWorld.}
\textbf{Left:}~Qwen3-4B Seen learning curves for GRPO with and without OPW. GPU-hours include teacher scoring and exclude evaluation.
\textbf{Right:}~Final Unseen repeated task success (pass\textasciicircum{}k).}
\label{fig:teaser}
\end{figure}

\section{Introduction}
\label{sec:introduction}

Reinforcement learning with verifiable rewards (RLVR) improves language-model reasoning using automatically checkable outcomes rather than human annotations \citep{shao2024deepseekmath,deepseekai2025r1}. It is appealing for agents that interleave reasoning with tool use and environment feedback \citep{yao2023react}. Yet outcome-based feedback offers limited guidance about the intermediate decisions required for success. This creates a cold-start challenge: when success requires several coordinated decisions, an initial policy may produce mostly failures with identical rewards. Under binary outcome rewards, all-failure groups provide no within-group task-reward contrast for group-relative methods such as GRPO \citep{shao2024deepseekmath}. Effective early training requires not just diverse trajectories, but successes frequent enough for the verifier to distinguish useful behavior.

Teacher supervision offers a way to improve this starting point, where knowledge distillation can transfer a denser supervision signal to a student \citep{hinton2015distilling}. However, imitation on fixed teacher-generated trajectories primarily supervises states visited by the teacher, whereas an agent must act on histories induced by its own decisions. This distribution mismatch is a central concern in imitation learning \citep{ross2011reduction} and is particularly consequential in interactive tasks: an imperfect action can alter subsequent tool outputs, available information, and opportunities for recovery. On-policy distillation addresses the mismatch by applying teacher supervision to student-generated trajectories \citep{agarwal2024gkd}. Whether such supervision provides a useful initialization for subsequent RLVR, rather than merely improving imitation, remains a distinct question.

In this work, we identify an \textbf{On-Policy Acceleration Phenomenon}: after on-policy distillation warmup, RLVR reaches high performance earlier and achieves higher final performance than the compared baselines. Motivated by this observation, we propose \textbf{On-Policy Warmup (OPW)}, a teacher-guided initialization stage for agentic RLVR. During warmup, the student generates trajectories by interacting with the environment and receives supervision from a fixed teacher. The student learns from this supervision before transitioning to RLVR, where training uses verifiable task outcomes without further teacher queries. Teacher supervision is confined to warmup.

The purpose of warmup is not simply to reduce distillation loss. The downstream objective is task success under sparse verifier feedback, and a student that imitates the teacher more closely need not, in general, learn faster during RLVR. Instead, we ask whether on-policy supervision makes successful trajectories accessible often enough to improve initial reward discovery \citep{chen2025coverage}. This question connects the distribution of trajectories induced by the warmed-up policy to the training signal available to the subsequent RLVR stage. It also separates the role of teacher guidance during initialization from the role of the verifier during later policy improvement.

We analyze this connection theoretically. Under shared environment dynamics, the cumulative reverse-KL distillation loss on student-visited states equals the KL divergence between student and teacher trajectory distributions. The data-processing inequality then bounds the gap between their success probabilities under the same binary verifier used to assess task completion. A sufficiently successful teacher and sufficiently small student population loss imply a positive lower bound on initial student success probability, and thus an upper bound on the expected rollouts to discover a success. For group-relative RLVR, we also characterize when higher success probability increases the likelihood of sampling a group containing both successes and failures. These results suggest initial reward discovery as a mechanism for the observed acceleration, but neither guarantee convergence nor imply lower total compute once warmup and teacher queries are included.

Our contributions are threefold:
\begin{itemize}[leftmargin=*]
    \item
    \textbf{We identify the On-Policy Acceleration Phenomenon in agentic RLVR.}
    In our main comparisons, on-policy distillation warmup leads to earlier high performance, higher average performance during subsequent RLVR, and higher final performance than the compared baselines.

    \item
    \textbf{We study OPW as a teacher-guided initialization stage for RLVR.}
    OPW provides teacher supervision along student-generated trajectories, including after unsuccessful actions, and is followed by RLVR without further teacher queries.

    \item
    \textbf{We connect on-policy reverse-KL distillation to initial reward discovery.}
    Through trajectory-level distribution matching, we derive a lower bound on initial verifier success and bound reward-discovery complexity under sufficient teacher competence and small population distillation loss.
\end{itemize}

\section{Related Work}

\begin{table}[t]
\centering
\begin{minipage}{0.94\linewidth}
\centering
\caption{Related work on warmup for RLVR.}
\label{tab:related_work}
\begingroup
\definecolor{RWGreen}{HTML}{009B55}
\definecolor{RWPurple}{cmyk}{0.75,0.90,0,0}
\newcommand{\rwYes}{\textcolor{RWGreen}{\CheckmarkBold}}
\newcommand{\rwNo}{\textcolor{red}{\XSolidBrush}}
\fontsize{7.5}{8}\selectfont
\setlength{\tabcolsep}{2pt}
\renewcommand{\arraystretch}{0.98}
\setlength{\aboverulesep}{0.25ex}
\setlength{\belowrulesep}{0.35ex}
\resizebox{\linewidth}{!}{%
\begin{tabular}{@{\hspace{8pt}}>{\columncolor{white}[8pt][\tabcolsep]}l|cc|cc|c>{\columncolor{white}[\tabcolsep][8pt]}c@{\hspace{8pt}}}
\toprule
& \multicolumn{2}{c|}{\textbf{Setting}}
& \multicolumn{2}{c|}{\textbf{Evidence}}
& \multicolumn{2}{c}{\textbf{Training study}} \\
\cmidrule(lr){2-3}\cmidrule(lr){4-5}\cmidrule(lr){6-7}
\textbf{Work}
& \makecell{On-policy\\warmup}
& \makecell{Environment\\interaction}
& \makecell{Theoretical\\analysis}
& \makecell{Action-level\\interventions}
& \makecell{Training\\cost}
& \makecell{Warmup\\duration} \\
\midrule
Sequential Beats Joint~\citep{li2026sequential}
& \rwYes & \rwNo & \rwNo & \rwNo & \rwNo & \rwYes \\
RL Starts before RL~\citep{dong2026rlstarts}
& \rwYes & \rwNo & \rwNo & \rwNo & \rwNo & \rwNo \\
OPDSearch+~\citep{ye2026opdsearch}
& \rwYes & \rwYes & \rwYes & \rwNo & \rwYes & \rwNo \\
PRISM~\citep{wang2026prism}
& \rwYes & \rwNo & \rwNo & \rwNo & \rwNo & \rwNo \\
PEAR~\citep{zhang2026pear}
& \rwNo & \rwNo & \rwYes & \rwNo & \rwNo & \rwNo \\
\midrule
\rowcolor{RWPurple!6}
\textbf{OPW (Ours)}
& \rwYes & \rwYes & \rwYes & \rwYes & \rwYes & \rwYes \\
\bottomrule
\end{tabular}%
}
\endgroup
\end{minipage}
\end{table}

\noindent\textbf{Warmup for RLVR.}
RLVR trains language-model policies using verifiable outcome rewards~\citep{shao2024deepseekmath}.
SFT is commonly used as warmup before RLVR in mathematical reasoning and logic tasks~\citep{luong2024reft,shrestha2025warmup}.
Stronger SFT performance does not necessarily lead to better performance after RL~\citep{kang2026quagmires,zhang2026pear}.
PEAR reweights fixed offline supervision using importance sampling to improve subsequent RL~\citep{zhang2026pear}.
TailSFT filters sequences according to their likelihood improvement during SFT to improve coverage and subsequent GRPO performance~\citep{malladi2026tailsft}.
For on-policy distillation (OPD), Sequential Beats Joint compares sequential and joint OPD--RLVR training and examines when to switch to RL~\citep{li2026sequential}.
RL Starts before RL shows that initial accuracy and pre-RL pass@$k$ do not fully explain subsequent performance, and compares trajectory sources and divergence objectives~\citep{dong2026rlstarts}.
PRISM inserts a black-box, response-level adversarial alignment stage between SFT and multimodal RLVR~\citep{wang2026prism}.
Closest to our interactive setting, OPDSearch+ applies teacher supervision to student trajectories collected through live search interactions before RL refinement~\citep{ye2026opdsearch}.
Table~\ref{tab:related_work} compares these warmup methods.

\noindent\textbf{On-Policy Supervision for Interactive Agents.}
Knowledge distillation transfers teacher behavior to a student~\citep{hinton2015distilling}.
For language models and agents, a common approach is SFT on teacher-generated responses or interaction trajectories~\citep{deepseekai2025r1,zeng2023agenttuning}.
GKD instead applies teacher feedback to student-generated sequences, addressing the mismatch between training data and the student's own outputs~\citep{agarwal2024gkd}.
MiniLLM studies reverse-KL distillation to discourage the student from assigning excessive probability to regions that are unlikely under the teacher~\citep{gu2024minillm}.
Guided-OPD extends this line of work to multi-turn agents by mixing teacher- and student-generated turns and gradually reducing the probability of teacher intervention~\citep{li2026guidedopd}.
Other methods combine ongoing RL with teacher trajectories~\citep{yan2025luffy}, token-level guidance from interaction feedback~\citep{wang2026openclawrl}, or self-distillation from completed interactions~\citep{wu2026seed}.
We study teacher supervision on student-generated trajectories as warmup before teacher-free RLVR in multi-turn environments.
Under population reverse-KL assumptions, we connect trajectory distributions to initial verifier success and reward-discovery complexity.
We remove supervision at repeated invalid actions, evaluate teacher-guided choices through task continuations from common histories, and track the retention of teacher-target preferences during GRPO.
We also compare warmup durations and recorded training costs, including teacher supervision, to distinguish faster subsequent learning from lower total cost.

\section{Method}
\label{sec:method}

We study OPD as warmup for RLVR. Our analysis connects distillation to trajectory-level distribution matching and informative reward observations during early RLVR.

\subsection{Theoretical Motivation for On-Policy Warmup}
\label{sec:theory}

\noindent\textbf{Setup.}
Fix a task $x$ and a finite-horizon interaction process with horizon $H$. Let $s_t$ denote the full interaction history,
including previous actions and environment observations, and let
$a_t$ denote the next policy action. Variable-length trajectories
can be padded with an absorbing state and a shared dummy action, which contributes zero KL. For token-level language policies, $t$ indexes policy-generated tokens, including reasoning and action tokens, and $H$ bounds their number.
The student policy $\pi$ and teacher policy $\pi_T$ interact with
the same environment and initial-state distribution. Let
$P_\pi^x$ and $P_T^x$ denote their induced trajectory distributions.
For a binary verifier $R_x(\tau)\in\{0,1\}$ indicating complete task success, define
$p_\pi(x)
=
\mathbb{E}_{\tau\sim P_\pi^x}[R_x(\tau)]$ and $p_T(x)
=
\mathbb{E}_{\tau\sim P_T^x}[R_x(\tau)]$.

All theoretical statements below are conditional on $x$; we suppress this
dependence when unambiguous. We define the population on-policy distillation loss as
\begin{equation}
\mathcal{L}_{\mathrm{OPD}}(\pi)
=
\sum_{t=1}^{H}
\mathbb{E}_{s_t\sim d_t^\pi}
\left[
D_{\mathrm{KL}}
\bigl(
\pi(\cdot\mid s_t)
\,\|\,
\pi_T(\cdot\mid s_t)
\bigr)
\right],
\label{eq:opd-loss}
\end{equation}
where $d_t^\pi$ is the student's state distribution.
We assume finite population loss and use natural logs.

\begin{proposition}[Trajectory-Level Interpretation]
\label{prop:trajectory-kl}
Under the shared-environment assumption,
$D_{\mathrm{KL}}(P_\pi\|P_T)=\mathcal{L}_{\mathrm{OPD}}(\pi)$.
\end{proposition}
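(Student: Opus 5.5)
The plan is to factor both trajectory distributions with the chain rule and observe that the environment transition factors cancel inside the log-likelihood ratio, leaving only policy factors. Write a trajectory as $\tau=(s_1,a_1,s_2,\dots,s_H,a_H)$. The history $s_{t+1}$ is determined by $(s_t,a_t)$ together with the next environment observation, which is drawn from a kernel $P_{\mathrm{env}}(\cdot\mid s_t,a_t)$. By the shared-environment assumption, this kernel and the initial distribution $\rho(s_1)$ are the same for $\pi$ and $\pi_T$. Hence
\begin{equation*}
P_\pi(\tau)=\rho(s_1)\prod_{t=1}^{H}\pi(a_t\mid s_t)\,P_{\mathrm{env}}(s_{t+1}\mid s_t,a_t),
\end{equation*}
and analogously for $P_T$ with $\pi_T$ in place of $\pi$. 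On the support of $P_\pi$, the ratio therefore reduces to $\log\frac{P_\pi(\tau)}{P_T(\tau)}=\sum_{t=1}^H\log\frac{\pi(a_t\mid s_t)}{\pi_T(a_t\mid s_t)}$.

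Next I take the expectation under $P_\pi$ and use linearity to swap sum and expectation. The $t$-th term depends only on $(s_t,a_t)$, and under $P_\pi$ this pair is distributed as $s_t\sim d_t^\pi$ followed by $a_t\sim\pi(\cdot\mid s_t)$. Conditioning on $s_t$ (tower property), the inner expectation over $a_t$ is exactly $D_{\mathrm{KL}}(\pi(\cdot\mid s_t)\,\|\,\pi_T(\cdot\mid s_t))$. Summing over $t$ gives $\mathcal{L}_{\mathrm{OPD}}(\pi)$ in \eqref{eq:opd-loss}. Padded steps use an absorbing state with a shared dummy action, so they contribute zero to both sides. For token-level policies the same argument applies with deterministic "environment" transitions (appending the token) between tool calls.

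The main obstacle is measure-theoretic bookkeeping rather than algebra. First, absolute continuity: if $\mathcal{L}_{\mathrm{OPD}}(\pi)<\infty$, then for $d_t^\pi$-a.e.\ $s_t$ we have $\pi(\cdot\mid s_t)\ll\pi_T(\cdot\mid s_t)$. Combined with the identical environment kernels, this gives $P_\pi\ll P_T$, so the density ratio above is valid $P_\pi$-a.s. Conversely, if some term were infinite, both sides would equal $+\infty$, so the identity holds in the extended sense.

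Second, interchanging the expectation with a sum of possibly sign-indefinite log-ratios needs justification. I would handle this by proving the identity inductively in $H$ via the KL chain rule, $D_{\mathrm{KL}}(P_{1:t+1}\|Q_{1:t+1})=D_{\mathrm{KL}}(P_{1:t}\|Q_{1:t})+\mathbb{E}_{P_{1:t}}[D_{\mathrm{KL}}(P_{t+1\mid 1:t}\|Q_{t+1\mid 1:t})]$. Each conditional KL is nonnegative, so the additions are well defined. The environment-step conditionals contribute zero because the two kernels coincide, and the action-step conditionals contribute the per-state policy KL terms.
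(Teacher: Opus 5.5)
Your proposal is correct and follows the paper's proof. Both factor $P_\pi$ and $P_T$, cancel the shared initial distribution and transition kernels in the log-likelihood ratio, and condition on $s_t$ to recover the per-state reverse KL summed over $t$. Your chain-rule justification is a more careful version of the same computation: it handles the interchange of sum and expectation and the $P_\pi\not\ll P_T$ case together, whereas the paper treats the non-absolutely-continuous case separately and does not comment on the interchange.
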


\begin{theorem}[Success Transfer from Teacher to Student]
\label{thm:success-transfer}
If $\mathcal{L}_{\mathrm{OPD}}(\pi)\leq\varepsilon$, then $\operatorname{KL}(p_\pi\|p_T)\leq \varepsilon,$ where $\operatorname{KL}$ denotes the KL divergence between
Bernoulli distributions. In particular,
\begin{equation}
|p_\pi-p_T|
\leq
\sqrt{\frac{\varepsilon}{2}},
\qquad
p_\pi
\geq
\left[
p_T-\sqrt{\frac{\varepsilon}{2}}
\right]_+.
\label{eq:success-bound}
\end{equation}
\end{theorem}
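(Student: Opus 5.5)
The argument combines Proposition~\ref{prop:trajectory-kl} with the data-processing inequality, applied to the binary verifier, and then converts the Bernoulli KL bound into a bound on $|p_\pi-p_T|$ via Pinsker's inequality.

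\emph{Step 1.} By Proposition~\ref{prop:trajectory-kl}, $D_{\mathrm{KL}}(P_\pi\|P_T)=\mathcal{L}_{\mathrm{OPD}}(\pi)\le\varepsilon$. This uses only the shared dynamics: in the chain-rule factorization of the trajectory likelihood ratio, the environment transition terms cancel, leaving the per-step policy KLs averaged over $d_t^\pi$.

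\emph{Step 2.} Consider the deterministic map $\tau\mapsto R_x(\tau)\in\{0,1\}$. Its pushforwards of $P_\pi$ and $P_T$ are $\mathrm{Bern}(p_\pi)$ and $\mathrm{Bern}(p_T)$. The data-processing inequality for KL (the chain rule plus nonnegativity of the conditional KL given $R$) gives $\operatorname{KL}(p_\pi\|p_T)\le D_{\mathrm{KL}}(P_\pi\|P_T)\le\varepsilon$. The one point to check is absolute continuity. Finiteness of $\varepsilon$ forces $P_\pi\ll P_T$, so the pushforward is also absolutely continuous. Edge cases such as $p_T\in\{0,1\}$ are therefore handled automatically: they force $p_\pi=p_T$.

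\emph{Step 3.} Pinsker's inequality gives $\mathrm{TV}\le\sqrt{\mathrm{KL}/2}$. For two Bernoullis the total variation equals $|p_\pi-p_T|$, so $|p_\pi-p_T|\le\sqrt{\varepsilon/2}$. Rearranging gives $p_\pi\ge p_T-\sqrt{\varepsilon/2}$, and combining this with $p_\pi\ge0$ yields the positive-part form in \eqref{eq:success-bound}.

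None of these steps is a real obstacle. The only care needed is the measure-theoretic justification of Step 2 when trajectories live in a large or token-level space and lengths are padded, together with confirming that Pinsker's inequality is in natural-log units. Both hold under the stated conventions.
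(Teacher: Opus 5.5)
Your proposal is correct and follows the paper's proof essentially step for step. It uses Proposition~\ref{prop:trajectory-kl} to identify $\mathcal{L}_{\mathrm{OPD}}(\pi)$ with the trajectory KL, applies the data-processing inequality to the verifier pushforward onto $\operatorname{Bern}(p_\pi)$ and $\operatorname{Bern}(p_T)$, and then uses Pinsker's inequality followed by rearrangement with $p_\pi\ge 0$. Your remark on absolute continuity and the endpoint cases $p_T\in\{0,1\}$ is a correct refinement; the paper handles these implicitly through its $0\log(0/q)=0$ and $p\log(p/0)=+\infty$ conventions.
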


Theorem~\ref{thm:success-transfer} provides a sufficient condition for transferring verifier success from a competent teacher. When the loss is reported as an average over $H$ steps, $\overline{\mathcal{L}}_{\mathrm{OPD}} =\mathcal{L}_{\mathrm{OPD}}/H$, the corresponding deviation bound is $\sqrt{H\overline{\mathcal{L}}_{\mathrm{OPD}}/2}$. A small average token-level loss alone need not give a tight guarantee over long interactions, because loss accumulates across action positions.

\noindent\textbf{Why On-Policy States Matter.}
The trajectory identity in
Proposition~\ref{prop:trajectory-kl} specifically requires
distillation under the student's distribution.
To illustrate the role of state coverage, define
$k_\pi(s)= D_{\mathrm{KL}}
\bigl(\pi(\cdot\mid s)\|\pi_T(\cdot\mid s)\bigr)
$
and consider evaluating the same local reverse-KL discrepancy under an
offline state distribution $\mu_t$: $\mathcal{L}_{\mu}(\pi)=
\sum_{t=1}^{H}\mathbb{E}_{s\sim\mu_t}[k_\pi(s)].$ If $d_t^\pi$ is absolutely continuous with respect to $\mu_t$ and $\frac{d d_t^\pi}{d\mu_t}(s)\leq C$ for every $t$ and almost every $s$, then
$\mathcal{L}_{\mathrm{OPD}}(\pi)\leq C\,\mathcal{L}_{\mu}(\pi).$ The success-transfer bound obtained from this offline reverse-KL loss thus depends on an additional state-coverage coefficient $C$. If $d_t^\pi\not\ll\mu_t$ for any $t$, no finite $C$ exists. OPD directly targets the state-weighted discrepancy appearing in
the trajectory KL, avoiding this additional distribution-transfer requirement at the population level. This is particularly relevant for agentic tasks, where imperfect actions can lead to novel tool outputs, execution failures, and recovery states.

\subsection{Implications for Early RLVR Reward Discovery}

For the student $\pi_{\mathrm{w}}$ after OPW, Theorem~\ref{thm:success-transfer} gives
$p_{\mathrm{w}}=p_{\pi_{\mathrm{w}}}\geq\ell=\left[p_T-\sqrt{\mathcal{L}_{\mathrm{OPD}}(\pi_{\mathrm{w}})/2}\right]_+$.

\begin{corollary}[Initial Reward-Discovery Complexity]
\label{cor:reward-discovery}
Suppose $\ell>0$. For independent rollouts from the fixed warm-start
policy $\pi_{\mathrm{w}}$, the probability of observing at least one
successful trajectory among $N$ rollouts satisfies
\begin{equation}
\Pr(\text{at least one success})=1-(1-p_{\mathrm{w}})^N
\geq
1-(1-\ell)^N
\geq
1-e^{-N\ell}.
\end{equation}
For any $\delta\in(0,1)$, $N\geq\left\lceil\frac{\log(1/\delta)}{\ell}\right\rceil$ is sufficient to observe a success with probability at least $1-\delta$. The expected number of rollouts until the first success is at most $1/\ell$ under the same fixed policy.
\end{corollary}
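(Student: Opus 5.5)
The plan is to treat the $N$ rollouts as i.i.d.\ Bernoulli trials with success probability $p_{\mathrm{w}}$. The only input from earlier in the paper is the bound $p_{\mathrm{w}}\geq\ell$, which comes from Theorem~\ref{thm:success-transfer} applied to $\pi_{\mathrm{w}}$ with $\varepsilon=\mathcal{L}_{\mathrm{OPD}}(\pi_{\mathrm{w}})$. Each rollout is drawn independently from the fixed policy $\pi_{\mathrm{w}}$ on the same task $x$. The verifier outcomes $R_x(\tau_1),\dots,R_x(\tau_N)$ are therefore independent, and each equals $1$ with probability exactly $p_{\mathrm{w}}$.

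\textbf{Step 1: the probability of at least one success.} The event ``no success'' is the intersection of $N$ independent events, each of probability $1-p_{\mathrm{w}}$. This gives the identity $\Pr(\text{at least one success})=1-(1-p_{\mathrm{w}})^N$.

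\textbf{Step 2: the two lower bounds.} The map $p\mapsto 1-(1-p)^N$ is nondecreasing on $[0,1]$. Since $p_{\mathrm{w}}\geq\ell$, we get $1-(1-p_{\mathrm{w}})^N\geq 1-(1-\ell)^N$. Next, $1-\ell\le e^{-\ell}$ for all real $\ell$, and both sides are nonnegative because $0<\ell\leq p_T\leq 1$. Raising to the $N$-th power preserves the inequality, so $(1-\ell)^N\le e^{-N\ell}$, which yields the second bound.

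\textbf{Step 3: the sample size for confidence $1-\delta$.} It suffices that $e^{-N\ell}\le\delta$, which is equivalent to $N\ge \log(1/\delta)/\ell$. Taking the ceiling gives an integer $N$ that meets this condition, so the stated $N$ observes a success with probability at least $1-\delta$.

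\textbf{Step 4: the expected number of rollouts.} The index of the first success is geometric with parameter $p_{\mathrm{w}}>0$, so its mean is $1/p_{\mathrm{w}}$. Since $p_{\mathrm{w}}\geq\ell$, this mean is at most $1/\ell$. Equivalently, one can write $\mathbb{E}=\sum_{n\ge0}(1-p_{\mathrm{w}})^n$ and bound the sum termwise.

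\textbf{Main obstacle.} None of these steps is technically hard. The point needing care is the modeling assumption: the bound holds only while the policy stays fixed at $\pi_{\mathrm{w}}$, before any RLVR update, so that the trials really are i.i.d.\ with common parameter $p_{\mathrm{w}}$. The proof should also state explicitly that $\ell>0$. This is needed both for the geometric mean to be finite and for the division by $\ell$ in Step 3 to make sense.
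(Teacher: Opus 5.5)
Your proposal is correct and follows essentially the same route as the paper: the first-success time is geometric with parameter $p_{\mathrm{w}}$, and $p_{\mathrm{w}}\geq\ell$ together with $1-\ell\leq e^{-\ell}$ bounds $(1-p_{\mathrm{w}})^N$; the sample size and the mean bound $1/p_{\mathrm{w}}\leq 1/\ell$ then follow directly. Your explicit check that $1-\ell\geq 0$ (since $\ell\leq p_T\leq 1$) before raising to the $N$-th power is a small detail the paper leaves implicit.
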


If the pre-warmup success probability is $p_0$ and $\ell>p_0$, OPD is certified to improve this initial discovery process. For group-relative methods with binary rewards, the relevant event is observing successes and failures in the same group. For $G\geq2$ independent rollouts on the
same task, its probability is $m_G(p)=1-p^G-(1-p)^G$.
Such groups contain nonzero empirical reward variance and can support reward-based differentiation among trajectories. The derivative
$m_G'(p)=G\bigl((1-p)^{G-1}-p^{G-1}\bigr)$
is nonnegative on $[0,1/2]$. Therefore, in the sparse-success regime $p_0<p_{\mathrm{w}}\leq1/2$, increasing success probability increases the frequency of reward-informative groups. The lower bound $p_{\mathrm{w}}\geq\ell$ alone does not imply an increase in reward-informative groups. This monotonicity does not hold globally: near $p=1$, groups can become uninformative because all trajectories succeed. For partial-credit rewards, unsuccessful trajectories can still receive different scores.

\begin{table*}[t]
\caption{Final task performance across three training paths after 100 steps (Qwen3-4B).}
\label{tab:continuous-objectives-final}
\centering\fontsize{7.5}{9}\selectfont
\renewcommand{\arraystretch}{1.12}
\setlength{\tabcolsep}{1pt}
\arrayrulecolor{black}
\begingroup
\begin{tabular}{@{\hspace{3pt}}>{\raggedright\arraybackslash}p{31pt}>{\raggedright\arraybackslash}p{78pt}*{7}{>{\centering\arraybackslash}p{17.8pt}}|*{7}{>{\centering\arraybackslash}p{17.8pt}}@{\hspace{3pt}}}
\toprule
\multirow{2}{*}{\textbf{Split}} & \multirow{2}{78pt}{\diagbox[width=78pt,height=20pt]{\textbf{Training}}{\textbf{Task type}}} & \multicolumn{7}{c|}{\textbf{ALFWorld}} & \multicolumn{7}{c}{\textbf{ScienceWorld}} \\
\cmidrule(lr){3-9}\cmidrule(l){10-16}
& & Pick & Clean & Heat & Cool & Exam & Pick2 & All & Find & Life & Gene. & Mix & Prop. & Phys. & All \\
\midrule
\multirow{3}{*}{\textbf{Seen}} & {\fontsize{7.5}{9}\selectfont Direct OPD} & 88.6 & 30.1 & 9.4 & 12.5 & 51.9 & 57.3 & 45.9 & 61.4 & \subbst{46.2} & 29.6 & \bst{30.4} & 52.1 & 44.2 & 48.9 \\
& {\fontsize{7.5}{9}\selectfont Direct GRPO} & \bst{100.0} & \subbst{83.3} & \subbst{78.9} & \subbst{83.5} & \subbst{85.6} & \subbst{95.8} & \subbst{89.4} & \subbst{80.3} & 39.8 & \subbst{55.3} & \subbst{25.1} & \subbst{61.1} & \subbst{62.1} & \subbst{61.2} \\
\rowcolor{OPDLightBlue}
\cellcolor{white} & {\fontsize{7.5}{9}\selectfont \textbf{OPW (Ours)} + GRPO} & \subbst{98.9} & \bst{98.1} & \bst{86.7} & \bst{93.5} & \bst{97.1} & \bst{97.9} & \bst{96.1} & \bst{97.0} & \bst{53.6} & \bst{78.7} & \bst{30.4} & \bst{98.0} & \bst{82.4} & \bst{87.3} \\
\midrule
\multirow{3}{*}{\textbf{Unseen}} & {\fontsize{7.5}{9}\selectfont Direct OPD} & \subbst{90.6} & 54.0 & 12.0 & 48.2 & 56.9 & 59.6 & 53.5 & 64.8 & 45.4 & 47.7 & 8.1 & 60.8 & 47.1 & 55.2 \\
& {\fontsize{7.5}{9}\selectfont Direct GRPO} & 88.5 & \subbst{82.3} & \subbst{83.7} & \subbst{90.5} & \subbst{83.3} & \subbst{94.1} & \subbst{86.6} & \bst{74.0} & \subbst{50.7} & \subbst{50.4} & \subbst{13.6} & \subbst{63.8} & \subbst{61.0} & \subbst{61.2} \\
\rowcolor{OPDLightBlue}
\cellcolor{white} & {\fontsize{7.5}{9}\selectfont \textbf{OPW (Ours)} + GRPO} & \bst{93.8} & \bst{100.0} & \bst{92.4} & \bst{99.4} & \bst{97.2} & \bst{97.1} & \bst{96.7} & \subbst{71.8} & \bst{62.5} & \bst{52.4} & \bst{15.4} & \bst{75.8} & \bst{74.9} & \bst{68.2} \\
\bottomrule
\end{tabular}
\endgroup
\end{table*}

\subsection{On-Policy Warmup for RLVR}
\label{sec:opw}

Motivated by this connection, we propose \emph{On-Policy Warmup} (OPW): a teacher-guided initialization stage preceding RLVR. Starting from the base model, the pipeline is $\pi_{\theta_0}
\;\xrightarrow{\text{OPD}}\;
\pi_{\theta_{\mathrm{w}}}
\;\xrightarrow{\text{RLVR}}\;
\pi_{\theta_{\mathrm{final}}}.$

\noindent\textbf{Stage I: Teacher Supervision on Student Trajectories.} Let $\mathcal{D}_{\mathrm{w}}$ and $\mathcal{D}_{\mathrm{RL}}$ denote the warmup and RLVR task distributions, and $\pi_T$ a fixed teacher policy. Our experiments use disjoint training task pools. At warmup iteration $k$, we sample a task $x\sim\mathcal{D}_{\mathrm{w}}$ and generate an interaction trajectory using the current student $\tau\sim P_{\pi_{\theta_k}}^x.$ We query the teacher at interaction histories along these trajectories. Unlike offline imitation of teacher trajectories, OPW exposes the teacher to states induced by the student's own decisions, including imperfect intermediate solutions, failed tool calls, and recovery.

The population reverse-KL objective considered in our analysis is
$
\mathcal{L}_{\mathrm{OPW}}(\theta)
=
\mathbb{E}_{x\sim\mathcal{D}_{\mathrm{w}}}
\left[
\mathcal{L}_{\mathrm{OPD}}(\pi_\theta;x)
\right]$, using the task-specific loss in Eq.~\ref{eq:opd-loss}. For this reverse-KL formulation, the corresponding fixed-history surrogate at iteration $k$ is
\begin{equation}
\widetilde{\mathcal{L}}_k(\theta)
=
\mathbb{E}_{
x\sim\mathcal{D}_{\mathrm{w}},
\,\tau\sim P_{\pi_{\theta_k}}^x
}
\left[
\sum_{t=1}^{H}
D_{\mathrm{KL}}
\left(
\pi_\theta(\cdot\mid s_t)
\,\|\,
\pi_T(\cdot\mid s_t)
\right)
\right].
\label{eq:opw-surrogate}
\end{equation}
Sampled histories are held fixed during each step, and trajectories are resampled as the student changes. Teacher supervision is applied
to policy-generated tokens, not to environment observations.

\noindent\textbf{Stage II: Verifier-Based Reinforcement Learning.}
After a prescribed warmup duration, we initialize RLVR with
$\pi_{\theta_{\mathrm{w}}}$ and optimize
\begin{equation}
J_{\mathrm{VR}}(\theta)
=
\mathbb{E}_{
x\sim\mathcal{D}_{\mathrm{RL}},
\,\tau\sim P_{\pi_\theta}^x
}
\left[r_x(\tau)\right],
\qquad
\theta\leftarrow\theta_{\mathrm{w}},
\label{eq:opw-rlvr}
\end{equation}
where $r_x(\tau)\in[0,1]$ is the training reward. It equals the binary verifier $R_x$ in ALFWorld and includes partial progress in ScienceWorld. We use GRPO's group-relative updates without probability-ratio clipping or additional KL or entropy terms (Table~\ref{tab:training-settings}). In the basic OPW pipeline, the distillation loss is removed after warmup and no further teacher queries are required.

\noindent\textbf{Scope.}
The bounds concern initial reward discovery under task-specific population reverse-KL control, with success measured under the same policy distributions. The analysis does not establish that finite warmup meets this condition or that it transfers across tasks or decoding settings.

\section{Experiments}
\label{sec:comparison}

\subsection{Experimental Setup}

\noindent\textbf{Benchmarks.}
We evaluate agents on ALFWorld~\citep{shridhar2021alfworld} and ScienceWorld~\citep{wang2022scienceworld}, which require multi-step interaction to complete household and scientific tasks. ALFWorld covers placing, cleaning, heating, cooling, examining objects under a light, and placing two objects, with 3,553 training tasks and 140 Seen and 134 Unseen test tasks. ScienceWorld contains 2,294 training tasks, of which 64 are reserved for Seen evaluation, and 200 Unseen test tasks. We group its tasks into finding, growth/lifespan, genetics, mixing, properties, and mechanics/energy.

\noindent\textbf{Baselines.}
We use Qwen3-1.7B and Qwen3-4B as students and Qwen3-32B as the teacher~\citep{yang2025qwen3}. All RLVR training uses GRPO. OPW uses cross-entropy on teacher top-1 targets at response-token positions in student-generated trajectories. We compare OPW with \textbf{(1)} off-policy SFT, \textbf{(2)} SFT with rejection-sampling (SFT-RS), and \textbf{(3)} teacher-free on-policy GRPO warmup as baselines. Each warmup method runs for 40 steps and is followed by 60 GRPO steps without further teacher supervision. We compare OPW + GRPO with direct OPD and direct GRPO over 100 steps.

\noindent\textbf{Evaluation metrics.}
Task performance is mean task success (pass@1, \%) in ALFWorld and normalized task scores (0--100) in ScienceWorld. We measure success coverage by pass@$k$ (at least one success in $k$ rollouts)~\citep{chen2021codex}, and repeated task success by pass\textasciicircum{}k (success in every rollout)~\citep{yao2024taubench,barres2025tau2bench}. ScienceWorld scores include partial progress, while success requires full task completion. Final results use eight trajectories per task, and the warmup success profile uses 64. Training and evaluation details are provided in Appendix~\ref{app:settings}.

\begin{table*}[t]
\caption{Final task performance after 40 warmup and 60 GRPO steps (Qwen3-4B).}
\label{tab:warmup-strategies-mean}
\centering\fontsize{7.5}{9}\selectfont
\renewcommand{\arraystretch}{1.12}
\setlength{\tabcolsep}{1pt}
\arrayrulecolor{black}
\begingroup
\begin{tabular}{@{\hspace{3pt}}>{\raggedright\arraybackslash}p{31pt}>{\raggedright\arraybackslash}p{78pt}*{7}{>{\centering\arraybackslash}p{17.8pt}}|*{7}{>{\centering\arraybackslash}p{17.8pt}}@{\hspace{3pt}}}
\toprule
\multirow{2}{*}{\textbf{Split}} & \multirow{2}{78pt}{\diagbox[width=78pt,height=20pt]{\textbf{Warmup}}{\textbf{Task type}}} & \multicolumn{7}{c|}{\textbf{ALFWorld}} & \multicolumn{7}{c}{\textbf{ScienceWorld}} \\
\cmidrule(lr){3-9}\cmidrule(l){10-16}
& & Pick & Clean & Heat & Cool & Exam & Pick2 & All & Find & Life & Gene. & Mix & Prop. & Phys. & All \\
\midrule
\multirow{4}{*}{\textbf{Seen}} & {\fontsize{7.5}{9}\selectfont Off-policy: SFT} & 97.5 & 82.4 & 75.8 & 74.5 & 66.3 & 90.6 & 83.9 & 69.9 & 49.3 & \subbst{67.8} & \bst{42.8} & 58.2 & 63.0 & 60.5 \\
& {\fontsize{7.5}{9}\selectfont Off-policy: SFT-RS} & \subbst{98.9} & \subbst{90.7} & 78.1 & \subbst{78.5} & \subbst{88.5} & \subbst{91.1} & \subbst{89.0} & 79.7 & 50.2 & 58.5 & 34.6 & 67.1 & 62.0 & 64.6 \\
& {\fontsize{7.5}{9}\selectfont On-policy: GRPO} & \bst{99.3} & 74.1 & \subbst{78.9} & 69.5 & 70.2 & \subbst{91.1} & 82.7 & \subbst{81.5} & \subbst{51.2} & 61.1 & \subbst{35.3} & \subbst{67.2} & \subbst{73.3} & \subbst{68.8} \\
\rowcolor{OPDLightBlue}
\cellcolor{white} & {\fontsize{7.5}{9}\selectfont \textbf{OPW (Ours)}} & \subbst{98.9} & \bst{98.1} & \bst{86.7} & \bst{93.5} & \bst{97.1} & \bst{97.9} & \bst{96.1} & \bst{97.0} & \bst{53.6} & \bst{78.7} & 30.4 & \bst{98.0} & \bst{82.4} & \bst{87.3} \\
\midrule
\multirow{4}{*}{\textbf{Unseen}} & {\fontsize{7.5}{9}\selectfont Off-policy: SFT} & \subbst{88.0} & 74.6 & \subbst{82.1} & 81.5 & 72.2 & 84.6 & 80.3 & \subbst{71.2} & 53.0 & 45.3 & 14.4 & 68.6 & \subbst{61.6} & 61.9 \\
& {\fontsize{7.5}{9}\selectfont Off-policy: SFT-RS} & \subbst{88.0} & \subbst{91.1} & 81.5 & \subbst{92.9} & \subbst{85.4} & \subbst{86.0} & \subbst{87.8} & 70.2 & 54.3 & 43.0 & 12.4 & \subbst{75.4} & 59.9 & \subbst{63.6} \\
& {\fontsize{7.5}{9}\selectfont On-policy: GRPO} & 85.4 & 78.2 & 71.2 & 77.4 & 66.7 & 77.2 & 76.5 & 68.4 & \subbst{54.5} & \subbst{50.0} & \bst{15.8} & 61.5 & 60.0 & 59.6 \\
\rowcolor{OPDLightBlue}
\cellcolor{white} & {\fontsize{7.5}{9}\selectfont \textbf{OPW (Ours)}} & \bst{93.8} & \bst{100.0} & \bst{92.4} & \bst{99.4} & \bst{97.2} & \bst{97.1} & \bst{96.7} & \bst{71.8} & \bst{62.5} & \bst{52.4} & \subbst{15.4} & \bst{75.8} & \bst{74.9} & \bst{68.2} \\
\bottomrule
\end{tabular}
\endgroup
\end{table*}

\begin{table*}[t]
\caption{Final repeated task success (pass\textasciicircum{}8, \%) for three training paths at 100 steps (Qwen3-4B).}
\label{tab:continuous-objectives-reliability}
\centering\fontsize{7.5}{9}\selectfont
\renewcommand{\arraystretch}{1.12}
\setlength{\tabcolsep}{1pt}
\arrayrulecolor{black}
\begingroup
\begin{tabular}{@{\hspace{3pt}}>{\raggedright\arraybackslash}p{31pt}>{\raggedright\arraybackslash}p{78pt}*{7}{>{\centering\arraybackslash}p{17.8pt}}|*{7}{>{\centering\arraybackslash}p{17.8pt}}@{\hspace{3pt}}}
\toprule
\multirow{2}{*}{\textbf{Split}} & \multirow{2}{78pt}{\diagbox[width=78pt,height=20pt]{\textbf{Training}}{\textbf{Task type}}} & \multicolumn{7}{c|}{\textbf{ALFWorld}} & \multicolumn{7}{c}{\textbf{ScienceWorld}} \\
\cmidrule(lr){3-9}\cmidrule(l){10-16}
& & Pick & Clean & Heat & Cool & Exam & Pick2 & All & Find & Life & Gene. & Mix & Prop. & Phys. & All \\
\midrule
\multirow{3}{*}{\textbf{Seen}} & {\fontsize{7.5}{9}\selectfont Direct OPD} & 80.0 & 25.9 & 0.0 & 0.0 & 23.1 & 16.7 & 30.0 & 0.0 & 0.0 & 0.0 & 0.0 & 0.0 & \subbst{4.8} & \subbst{1.6} \\
& {\fontsize{7.5}{9}\selectfont Direct GRPO} & \bst{100.0} & \subbst{48.1} & \subbst{68.8} & \subbst{60.0} & \subbst{38.5} & \bst{87.5} & \subbst{71.4} & 0.0 & 0.0 & 0.0 & 0.0 & 0.0 & 0.0 & 0.0 \\
\rowcolor{OPDLightBlue}
\cellcolor{white} & {\fontsize{7.5}{9}\selectfont \textbf{OPW (Ours)} + GRPO} & \subbst{97.1} & \bst{92.6} & \bst{81.2} & \bst{80.0} & \bst{84.6} & \subbst{83.3} & \bst{87.9} & \bst{62.5} & \bst{25.0} & \bst{50.0} & 0.0 & \bst{63.0} & \bst{19.0} & \bst{43.8} \\
\midrule
\multirow{3}{*}{\textbf{Unseen}} & {\fontsize{7.5}{9}\selectfont Direct OPD} & \subbst{83.3} & \subbst{41.9} & 0.0 & 9.5 & 11.1 & 5.9 & 28.4 & 1.9 & \subbst{2.9} & \subbst{12.0} & 0.0 & 0.0 & 0.0 & 2.5 \\
& {\fontsize{7.5}{9}\selectfont Direct GRPO} & 79.2 & 38.7 & \subbst{56.5} & \subbst{61.9} & \subbst{72.2} & \subbst{76.5} & \subbst{61.9} & \bst{9.6} & \subbst{2.9} & 8.0 & 0.0 & \subbst{3.1} & \bst{21.1} & \subbst{7.0} \\
\rowcolor{OPDLightBlue}
\cellcolor{white} & {\fontsize{7.5}{9}\selectfont \textbf{OPW (Ours)} + GRPO} & \bst{87.5} & \bst{100.0} & \bst{82.6} & \bst{95.2} & \bst{77.8} & \bst{82.4} & \bst{88.8} & \subbst{3.8} & \bst{14.3} & \bst{16.0} & 0.0 & \bst{4.6} & \subbst{15.8} & \bst{8.5} \\
\bottomrule
\end{tabular}
\endgroup
\end{table*}

\begin{table*}[t]
\caption{Success coverage and repeated task success after warmup (pass@$k$/pass\textasciicircum{}k, \%).}
\label{tab:warmup-profile}
\par\smallskip
\centering\fontsize{8.5}{10.5}\selectfont
\setlength{\tabcolsep}{1.5pt}
\renewcommand{\arraystretch}{1.12}
\newcommand{\warmuppair}[2]{#1\hspace{0.4pt}/\hspace{0.4pt}#2}
\begingroup
\begin{tabular}{@{\hspace{6pt}}p{88pt}*{8}{>{\centering\arraybackslash}p{\dimexpr(\textwidth-124pt)/8\relax}}@{\hspace{6pt}}}
\toprule
\multirow{2}{88pt}{\diagbox[width=88pt,height=22pt]{\textbf{Policy}}{\textbf{Split / $k$}}} & \multicolumn{4}{c}{\textbf{Seen}} & \multicolumn{4}{c}{\textbf{Unseen}} \\
\cmidrule(lr){2-5}\cmidrule(lr){6-9}
& $k=8$ & $k=16$ & $k=32$ & $k=64$ & $k=8$ & $k=16$ & $k=32$ & $k=64$ \\
\midrule
\rowcolor{TableHeader}
\multicolumn{9}{@{\hspace{6pt}}c@{\hspace{6pt}}}{\textbf{ALFWorld}} \\
\addlinespace[2pt]
Teacher (32B) & \warmuppair{\subbst{70.9}}{27.5} & \warmuppair{\subbst{76.4}}{21.5} & \warmuppair{\subbst{81.0}}{15.2} & \warmuppair{\subbst{85.7}}{10.0} & \warmuppair{\bst{88.3}}{\subbst{24.2}} & \warmuppair{\bst{93.8}}{\subbst{17.3}} & \warmuppair{\bst{97.1}}{\subbst{11.5}} & \warmuppair{\bst{99.3}}{\subbst{6.7}} \\
Base (4B) & \warmuppair{51.5}{15.8} & \warmuppair{58.4}{13.2} & \warmuppair{64.6}{10.9} & \warmuppair{69.3}{9.3} & \warmuppair{55.2}{10.9} & \warmuppair{63.1}{8.0} & \warmuppair{70.7}{5.9} & \warmuppair{78.4}{4.5} \\
\cmidrule(lr){1-9}
{\fontsize{7.5}{9}\selectfont Off-policy: SFT} & \warmuppair{52.9}{12.6} & \warmuppair{60.0}{9.3} & \warmuppair{66.1}{6.7} & \warmuppair{71.4}{4.3} & \warmuppair{58.7}{7.7} & \warmuppair{67.8}{5.2} & \warmuppair{75.8}{3.8} & \warmuppair{82.8}{3.0} \\
{\fontsize{7.5}{9}\selectfont Off-policy: SFT-RS} & \warmuppair{53.2}{12.6} & \warmuppair{60.7}{9.1} & \warmuppair{67.5}{6.4} & \warmuppair{73.6}{4.3} & \warmuppair{59.8}{7.0} & \warmuppair{69.0}{3.7} & \warmuppair{76.8}{1.9} & \warmuppair{84.3}{1.5} \\
{\fontsize{7.5}{9}\selectfont On-policy: GRPO} & \warmuppair{\bst{78.6}}{\bst{28.4}} & \warmuppair{\bst{83.6}}{\subbst{23.0}} & \warmuppair{\bst{87.9}}{\subbst{18.6}} & \warmuppair{\bst{91.4}}{\subbst{15.0}} & \warmuppair{80.1}{19.8} & \warmuppair{84.8}{13.9} & \warmuppair{88.1}{9.0} & \warmuppair{90.3}{5.2} \\
\rowcolor{OPDLightBlue}
{\fontsize{7.5}{9}\selectfont \textbf{OPW (Ours)}} & \warmuppair{69.3}{\subbst{28.1}} & \warmuppair{75.2}{\bst{24.6}} & \warmuppair{79.5}{\bst{22.4}} & \warmuppair{82.9}{\bst{21.4}} & \warmuppair{\subbst{88.1}}{\bst{27.5}} & \warmuppair{\subbst{93.2}}{\bst{22.9}} & \warmuppair{\subbst{95.0}}{\bst{19.7}} & \warmuppair{\subbst{95.5}}{\bst{17.9}} \\
\midrule
\rowcolor{TableHeader}
\multicolumn{9}{@{\hspace{6pt}}c@{\hspace{6pt}}}{\textbf{ScienceWorld}} \\
\addlinespace[2pt]
Teacher (32B) & \warmuppair{\bst{75.8}}{0.6} & \warmuppair{\bst{85.0}}{0.0} & \warmuppair{\bst{91.9}}{0.0} & \warmuppair{\bst{95.3}}{0.0} & \warmuppair{\bst{76.3}}{3.1} & \warmuppair{\bst{84.5}}{1.1} & \warmuppair{\bst{88.6}}{0.2} & \warmuppair{\bst{90.5}}{0.0} \\
Base (4B) & \warmuppair{23.9}{0.0} & \warmuppair{28.9}{0.0} & \warmuppair{32.9}{0.0} & \warmuppair{35.9}{0.0} & \warmuppair{35.9}{2.9} & \warmuppair{44.6}{1.7} & \warmuppair{53.5}{\subbst{0.8}} & \warmuppair{62.0}{0.0} \\
\cmidrule(lr){1-9}
{\fontsize{7.5}{9}\selectfont Off-policy: SFT} & \warmuppair{25.6}{0.0} & \warmuppair{31.6}{0.0} & \warmuppair{37.0}{0.0} & \warmuppair{42.2}{0.0} & \warmuppair{37.1}{2.1} & \warmuppair{45.8}{0.7} & \warmuppair{54.4}{0.1} & \warmuppair{63.0}{0.0} \\
{\fontsize{7.5}{9}\selectfont Off-policy: SFT-RS} & \warmuppair{22.0}{0.0} & \warmuppair{27.2}{0.0} & \warmuppair{31.9}{0.0} & \warmuppair{35.9}{0.0} & \warmuppair{38.0}{2.1} & \warmuppair{46.6}{0.7} & \warmuppair{55.2}{0.1} & \warmuppair{63.5}{0.0} \\
{\fontsize{7.5}{9}\selectfont On-policy: GRPO} & \warmuppair{51.5}{\subbst{1.4}} & \warmuppair{56.3}{\subbst{0.1}} & \warmuppair{59.5}{0.0} & \warmuppair{62.5}{0.0} & \warmuppair{62.9}{\bst{5.4}} & \warmuppair{71.1}{\bst{2.3}} & \warmuppair{77.4}{\subbst{0.8}} & \warmuppair{82.5}{\bst{0.5}} \\
\rowcolor{OPDLightBlue}
{\fontsize{7.5}{9}\selectfont \textbf{OPW (Ours)}} & \warmuppair{\subbst{73.7}}{\bst{7.0}} & \warmuppair{\subbst{82.3}}{\bst{3.5}} & \warmuppair{\subbst{87.6}}{\bst{1.7}} & \warmuppair{\subbst{90.6}}{0.0} & \warmuppair{\subbst{66.8}}{\subbst{4.5}} & \warmuppair{\subbst{75.9}}{\subbst{2.0}} & \warmuppair{\subbst{82.4}}{\bst{0.9}} & \warmuppair{\subbst{85.5}}{\bst{0.5}} \\
\bottomrule
\addlinespace[2pt]
\multicolumn{9}{@{\hspace{6pt}}l@{}}{\fontsize{8}{9.5}\selectfont Each cell: pass@$k$/pass\textasciicircum{}k (\%). \bst{Best} and \subbst{second-best} nonzero values per metric and column, including Teacher.} \\
\end{tabular}
\endgroup
\end{table*}

\begin{figure*}[t]
\centering
\includegraphics[width=\textwidth]{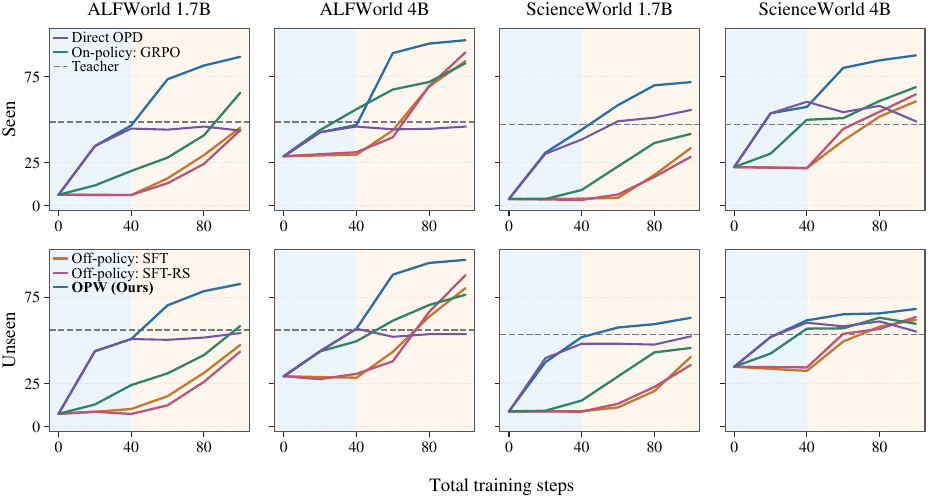}
\caption{Task performance during warmup and GRPO.}
\label{fig:warmup-cross-environment}
\end{figure*}

\subsection{Results}

\noindent\textbf{Warmup VS. Direct Training.}
OPW followed by GRPO achieves the best final task performance among the three training paths in both environments (Table~\ref{tab:continuous-objectives-final}). On ALFWorld Unseen, it reaches 96.7\% success, compared with 86.6\% for direct GRPO and 53.5\% for direct OPD. The same ordering holds in ScienceWorld, where Unseen task scores reach 68.2, 61.2, and 55.2, respectively. These results support on-policy distillation as an effective warmup for agentic RLVR.

\noindent\textbf{Comparison of Warmup Strategies.}
OPW yields the highest final task performance under the same GRPO procedure after warmup (Table~\ref{tab:warmup-strategies-mean}). On ALFWorld Unseen, restricting SFT to successful teacher trajectories raises final success from 80.3\% to 87.8\%, while OPW reaches 96.7\%, compared with 76.5\% after GRPO warmup. In ScienceWorld, OPW exceeds the strongest alternative by 18.5 points on Seen tasks and 4.6 points on Unseen tasks. Gains over both SFT variants and teacher-free warmup support teacher supervision on student-generated trajectories during warmup.

\noindent\textbf{Task-Specific Performance.}
In ALFWorld, Clean, Heat, and Cool require changing an object's state before placement, adding intermediate requirements to the retrieval and placement needed for Pick~\citep{shridhar2021alfworld}. OPW improves these tasks in both comparisons (Tables~\ref{tab:continuous-objectives-final} and~\ref{tab:warmup-strategies-mean}). In the warmup comparison, Seen Pick success already exceeds 97\% for all methods, while OPW improves Clean and Cool success over GRPO warmup by 24.0 percentage points each. In ScienceWorld, Mixing requires selecting and combining materials to obtain a target product~\citep{wang2022scienceworld}. OPW does not lead this category despite its higher scores on Properties and Mechanics/Energy, showing that its gains do not extend uniformly across scientific tasks (Table~\ref{tab:warmup-strategies-mean}).

\noindent\textbf{Repeated Task Success.}
OPW improves repeated task success in both environments (Table~\ref{tab:continuous-objectives-reliability}). On ALFWorld Unseen, final pass\textasciicircum{}8 reaches 88.8\%, compared with 61.9\% for direct GRPO and 28.4\% for direct OPD. In ScienceWorld, OPW followed by GRPO reaches final pass\textasciicircum{}8 of 43.8\% on Seen tasks and 8.5\% on Unseen tasks, compared with 0.0\% and 7.0\% for direct GRPO. Thus, more tasks are completed successfully in every rollout, although repeated task success remains difficult on ScienceWorld Unseen. The comparison after warmup further distinguishes success coverage from repeated task success (Table~\ref{tab:warmup-profile}). On ALFWorld Seen, OPW has lower pass@64 than GRPO warmup (82.9\% versus 91.4\%) but higher pass\textasciicircum{}64 (21.4\% versus 15.0\%). Its stronger final task performance therefore does not require the broadest initial success coverage. We next examine how warmup affects subsequent learning and the behavior generated during teacher-free GRPO.

\begin{table}[t]
\centering
\begin{minipage}[c]{0.62\textwidth}
\centering
\caption{Invalid actions after warmup (Qwen3-4B). Values are task-mean shares of all actions (\%) across Seen and Unseen.}
\label{tab:warmup-action-composition}
\fontsize{8}{9.5}\selectfont
\setlength{\tabcolsep}{1.5pt}
\renewcommand{\arraystretch}{1.08}
\arrayrulecolor{black}
\begingroup
\begin{tabular}{@{\hspace{4pt}}>{\raggedright\arraybackslash}p{52pt}*{3}{>{\centering\arraybackslash}p{\dimexpr(\linewidth-78pt-\arrayrulewidth)/6\relax}}|*{3}{>{\centering\arraybackslash}p{\dimexpr(\linewidth-78pt-\arrayrulewidth)/6\relax}}@{\hspace{4pt}}}
\toprule
\multirow{2}{52pt}{\diagbox[width=52pt,height=20pt]{\textbf{Warmup}}{\textbf{Type}}} & \multicolumn{3}{c|}{\textbf{ALFWorld}} & \multicolumn{3}{c}{\textbf{ScienceWorld}} \\
\cmidrule(lr){2-4}\cmidrule(lr){5-7}
& Nonrep. & Repeat & Consec. & Nonrep. & Repeat & Consec. \\
\midrule
SFT & 3.05 & 10.10 & 5.90 & 16.23 & 24.91 & 13.68 \\
SFT-RS & 3.12 & 10.24 & 5.90 & 16.11 & 23.79 & 12.88 \\
GRPO & 3.52 & 6.78 & 5.25 & 29.05 & 23.15 & 8.01 \\
\rowcolor{OPDLightBlue}
\textbf{OPW (Ours)} & 3.14 & 3.19 & 0.53 & 30.62 & 19.17 & 4.02 \\
\bottomrule
\end{tabular}
\endgroup
\end{minipage}\hfill
\begin{minipage}[c]{0.35\textwidth}
\centering
\caption{Task performance for fixed and resampled trajectories (ALFWorld 4B, Seen/Unseen).}
\label{tab:student-resampling-summary}
\fontsize{8}{9.5}\selectfont
\setlength{\tabcolsep}{1pt}
\renewcommand{\arraystretch}{1.08}
\begingroup
\begin{tabular}{@{\hspace{2pt}}>{\raggedright\arraybackslash}p{51pt}>{\centering\arraybackslash}p{34pt}>{\columncolor{OPDLightBlue}\centering\arraybackslash}p{\dimexpr\linewidth-93pt\relax}@{\hspace{2pt}}}
\toprule
\textbf{Metric} & \textbf{Fixed} & \textbf{OPW (Ours)} \\
\midrule
Early gain (pp) & 38.6\,/\,29.2 & 41.8\,/\,31.4 \\
Average (\%) & 81.1\,/\,81.4 & 84.7\,/\,86.6 \\
Final (\%) & 97.4\,/\,95.1 & 96.1\,/\,96.7 \\
\bottomrule
\end{tabular}
\endgroup
\end{minipage}
\end{table}

\begin{figure*}[t]
\centering
\includegraphics[width=\textwidth]{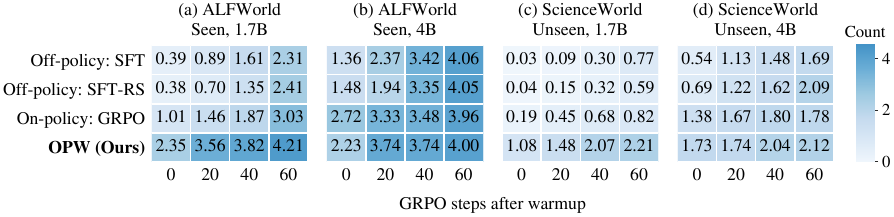}
\caption{Distinct successful action sequences among eight rollouts per task during GRPO.}
\label{fig:successful-sequence-dynamics}
\end{figure*}

\section{More Analysis}
\label{sec:analysis}

\noindent\textbf{Learning Efficiency and Training Cost.} OPW improves subsequent learning even from lower warmup success. On ALFWorld 4B Seen, it starts below GRPO warmup (46.8\% versus 56.0\%) but gains 41.8 versus 11.4 percentage points in the first 20 GRPO steps after warmup. Its average success over all 60 steps is 84.7\% versus 69.5\% (Fig.~\ref{fig:warmup-cross-environment}). OPW has the highest average task performance across environments, model sizes, and splits, even on ScienceWorld 1.7B Unseen, where GRPO warmup yields a larger early gain. Warmup scores alone do not explain subsequent learning.

OPW exceeds 80\% Seen success on ALFWorld 4B at 19.6 recorded training GPU-hours including teacher supervision (Appendix~\ref{app:cross-environment}), versus at least 29.6 for other methods. Only OPW exceeds 90\% within 100 steps. Over 100 steps, OPW costs less than GRPO warmup at 4B but more at 1.7B.

\noindent\textbf{Repeated Invalid Actions.} OPW's main reduction in ALFWorld action errors concerns repeated invalid commands. We classify validity by membership in the current valid-action list and repetition by earlier use in the trajectory. Unparseable responses are invalid. Repeated invalid commands account for 3.19\% of actions after OPW, versus 10.10\% after SFT and 6.78\% after GRPO warmup, while non-repeated invalid actions remain near 3\% (Table~\ref{tab:warmup-action-composition}). In ScienceWorld, OPW reduces repeated errors but produces more non-repeated invalid actions than SFT. This list-based measure can also mark accepted navigation aliases as invalid. Similar fractions of SFT and OPW trajectories contain an invalid action on ALFWorld Seen (50.9\% and 50.0\%), despite different levels of repetition. Appendix~\ref{app:trajectory-cases} illustrates these errors and valid repetition in two complete action sequences.

To test the contribution of supervision at these errors, we remove teacher supervision at repeated invalid actions. Early GRPO gain on ALFWorld 4B Seen falls to 34.6 percentage points, compared with 42.0 under random removal of a similar number of action tokens (Table~\ref{tab:error-supervision-learning}). Warmup action validity remains similar. Supervision at repeated errors contributes to subsequent learning.

\begin{figure*}[t]
\centering
\includegraphics[width=\textwidth]{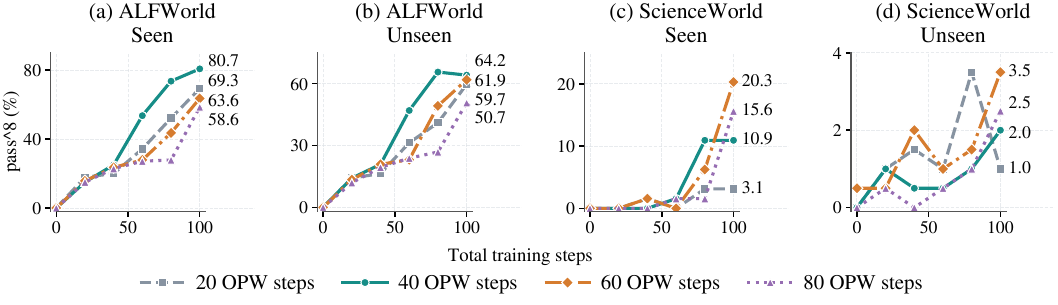}
\caption{Repeated task success (pass\textasciicircum{}8) across OPW durations (Qwen3-1.7B).}
\label{fig:warmup-reliability}
\end{figure*}

\noindent\textbf{Teacher Supervision on Student-Generated Trajectories.} Teacher supervision provides a learning signal on unsuccessful student trajectories. In all-failure groups on ALFWorld 4B, one optimizer update from Base raises mean teacher-target log probability at teacher--student disagreements by 0.183 with OPD, versus 0.128 with SFT and 0.004 with GRPO (Appendix~\ref{app:supervision}). Compared with reusing initial student trajectories, resampling improves average success during GRPO by 3.6 percentage points on Seen and 5.2 on Unseen (Table~\ref{tab:student-resampling-summary}). Fixed trajectories still yield higher final Seen success, so resampling mainly benefits early learning and average success during GRPO.

\noindent\textbf{Task Value and Retention of Teacher Corrections.} Teacher substitution raises continuation success from 26.48\% to 29.70\% at 311 selected action-token disagreements (Appendix~\ref{app:expanded-action-value}). Base continues both branches. Over 40 teacher-free GRPO steps, teacher-target probability rises from 39.3\% to 47.2\% at these positions but changes little on a broader disagreement set, from 43.8\% to 43.2\% (Table~\ref{tab:warmup-choice-persistence}). The broader set shows retention, with further gains at selected action positions.

A complementary evaluation (Appendix~\ref{app:action-value-analysis}) samples complete actions from each policy at common histories, with Base taking all subsequent actions. At 447 active positions from 63 tasks, continuation success increases from 45.00\% to 46.75\% over 40 GRPO steps after OPW, compared with 45.93\% to 46.47\% after GRPO warmup. OPW again improves more from a lower starting score. With Base fixed, these gains measure action selection at the evaluated histories.

\noindent\textbf{Diversity of Successful Action Sequences.} OPW samples more distinct successful action sequences early in GRPO across both environments and model sizes (Fig.~\ref{fig:successful-sequence-dynamics}). After sequence normalization (Appendix~\ref{app:behavior}), its mean count among eight rollouts per task rises from 2.23 to 4.00 over 60 GRPO steps on ALFWorld 4B Seen. Failure-only sequences fall from 2.95 to 0.31, accounting for the lower total sequence count. Among four selected successful trajectories on common tasks with enough successes under every method and checkpoint, OPW's expected distinct-sequence count falls from 2.54 to 2.14 on ALFWorld 4B but rises from 1.83 to 2.32 on ScienceWorld 4B. More successful sequences in eight rollouts need not mean more varied successful trajectories.

\noindent\textbf{Choosing the Warmup Duration.} Teacher supervision can improve repeated task success after most actions become valid. On ALFWorld 1.7B, extending OPW from 20 to 40 steps raises validity only from 89.4\% to 91.8\%, while pass\textasciicircum{}8 more than doubles from 10.8\% to 22.0\%. Fig.~\ref{fig:warmup-reliability} compares 20, 40, 60, and 80 OPW steps followed by GRPO to 100 total steps, with varying seeds. The ALFWorld 20/40 comparison branches from the same warmup run. On Seen tasks, longer warmup raises final pass\textasciicircum{}8 from 69.3\% to 80.7\% and moves the first evaluation reaching 80\% success from total step 75 to 60. In a separate ScienceWorld comparison from one 40-step OPW checkpoint, switching immediately gives higher Unseen pass\textasciicircum{}8 than adding 20 OPW steps (4.5\% versus 3.5\%), despite a lower mean task score. Choose warmup duration by subsequent validation on the required task outcome. Additional duration comparisons and warmup diagnostics are provided in Appendix~\ref{app:process-switching}.

\section{Conclusion}

We study On-Policy Warmup (OPW), which uses teacher supervision on student-generated trajectories before RLVR. Our main comparisons in ALFWorld and ScienceWorld show the On-Policy Acceleration Phenomenon, with OPW reaching high performance earlier and achieving higher average and final task performance. Our theory connects on-policy reverse-KL distillation to trajectory-level distribution matching, initial verifier success, and reward-discovery complexity under its stated conditions. Empirical analyses show that teacher-target probability advantages persist during teacher-free GRPO, alongside improved action selection and repeated task success. Together, these findings support OPW as an effective warmup for agentic RLVR.

\section*{AI Use Statement}
AI assistants were used to refine the language and improve the clarity of the manuscript. All scientific claims, theoretical analyses, experimental results, and references were reviewed and verified by the authors, who take full responsibility for the final manuscript.

\section*{Reproducibility Statement}
Section~\ref{sec:method} defines OPW and connects population reverse-KL distillation to initial verifier success and reward discovery, with assumptions and complete proofs in Appendix~\ref{app:proofs}. Section~\ref{sec:comparison} specifies the models, benchmarks, baselines, and evaluation metrics. Appendix~\ref{app:settings} details task partitions, agent interaction, the teacher top-1 cross-entropy implementation, optimization settings, rollout sampling, and metric estimators. The analyses in Section~\ref{sec:analysis} are supported by protocols for teacher-target probability measurements, trajectory resampling, and supervision removal in Appendix~\ref{app:supervision}. Appendix~\ref{app:action-value-analysis} describes token and complete-action interventions at common histories, including position selection, continuation policies, aggregation, and measurement of teacher-target preferences during teacher-free GRPO. Appendix~\ref{app:behavior} defines action-error and successful action-sequence measurements. Appendices~\ref{app:cross-environment} and~\ref{app:process-switching} document additional training-length and seed comparisons, training-cost accounting including teacher supervision, and warmup-duration allocations. Appendix~\ref{app:trajectory-cases} provides complete action traces with selected environment feedback.

\bibliographystyle{iclr2027_conference}
\bibliography{references}

\clearpage
\appendix
\addtocontents{toc}{\protect\setcounter{tocdepth}{2}}
\begingroup
\renewcommand{\contentsname}{Appendix Contents}
\hypersetup{linktoc=all}
\tableofcontents
\endgroup
\section{Proofs and Theoretical Details}
\label{app:proofs}

\subsection{Trajectory KL and Success Transfer}

Fix a task $x$ and suppress its index. The student $\pi$ and teacher $\pi_T$ induce distributions $P_\pi$ and $P_T$ over horizon $H$, with the same initial distribution $\rho$ and transition kernels $K_t(\cdot\mid s_t,a_t)$. States contain full interaction histories, and $d_t^\pi$ is the student's state distribution. Padding uses an absorbing state and a shared deterministic null action with zero KL. We use the cumulative objective in Eq.~\ref{eq:opd-loss} and natural logarithms, with $0\log(0/q)=0$ and $p\log(p/0)=+\infty$ for $p>0$.

\begin{proof}[Proof of Proposition~\ref{prop:trajectory-kl}]
Write a trajectory as
$\tau=(s_1,a_1,\ldots,s_H,a_H,s_{H+1})$.
Using probability-mass notation, its probability under the student
factorizes as
\[
P_\pi(\tau)
=
\rho(s_1)
\prod_{t=1}^{H}
\pi(a_t \mid s_t)
K_t(s_{t+1} \mid s_t,a_t),
\]
and the teacher trajectory probability $P_T(\tau)$ has the same factorization with $\pi$ replaced by $\pi_T$.

First suppose that $P_\pi \ll P_T$.
The initial-state and environment-transition factors are identical
under the two policies and therefore cancel in the likelihood ratio.
Consequently, $P_\pi$-almost surely,
\[
\log \frac{P_\pi(\tau)}{P_T(\tau)}
=
\sum_{t=1}^{H}
\log
\frac{\pi(a_t \mid s_t)}
     {\pi_T(a_t \mid s_t)}.
\]
Taking expectation under $P_\pi$ and conditioning on $s_t$ yields
\begin{align*}
D_{\mathrm{KL}}(P_\pi \| P_T)
&=
\mathbb{E}_{\tau \sim P_\pi}
\left[
\sum_{t=1}^{H}
\log
\frac{\pi(a_t \mid s_t)}
     {\pi_T(a_t \mid s_t)}
\right]
\\
&=
\sum_{t=1}^{H}
\mathbb{E}_{s_t \sim d_t^\pi}
\left[
\mathbb{E}_{a_t \sim \pi(\cdot \mid s_t)}
\left[
\log
\frac{\pi(a_t \mid s_t)}
     {\pi_T(a_t \mid s_t)}
\right]
\right]
\\
&=
\sum_{t=1}^{H}
\mathbb{E}_{s_t \sim d_t^\pi}
\left[
D_{\mathrm{KL}}
\left(
\pi(\cdot \mid s_t)
\,\middle\|\,
\pi_T(\cdot \mid s_t)
\right)
\right]
\\
&=
\mathcal{L}_{\mathrm{OPD}}(\pi).
\end{align*}

If $P_\pi \not\ll P_T$, the shared initial distribution and
transition kernels imply an action-support mismatch on a set of
student-visited states with positive probability.
Both the expected conditional and trajectory KL divergences are infinite.
The trajectory identity therefore holds under these conventions.
\end{proof}

\begin{proof}[Proof of Theorem~\ref{thm:success-transfer}]
Let $V(\tau)=R_x(\tau)$ be the shared binary verifier. Its success probabilities are $p_\pi$ and $p_T$ as defined in the main text. For Bernoulli distributions, $\operatorname{KL}(p\|q)=p\log(p/q)+(1-p)\log((1-p)/(1-q))$, using the same endpoint conventions.

Applying $V$ maps $P_\pi$ and $P_T$ to $\operatorname{Bern}(p_\pi)$ and $\operatorname{Bern}(p_T)$.
The data-processing inequality
\citep[Theorem~9]{vanerven2014renyi} therefore gives
\begin{align*}
\operatorname{KL}(p_\pi\|p_T)
&=
D_{\mathrm{KL}}
\left(
\operatorname{Bern}(p_\pi)
\,\middle\|\,
\operatorname{Bern}(p_T)
\right)
\\
&\leq
D_{\mathrm{KL}}(P_\pi\|P_T)
\\
&=
\mathcal{L}_{\mathrm{OPD}}(\pi)
\\
&\leq \varepsilon,
\end{align*}
where the equality in the third line follows from
Proposition~\ref{prop:trajectory-kl}.

For Bernoulli distributions, Pinsker's inequality
\citep[Theorem~31]{vanerven2014renyi} yields
\[
2|p_\pi-p_T|^2
\leq
\operatorname{KL}(p_\pi\|p_T)
\leq \varepsilon.
\]
Taking square roots gives the gap bound in
Eq.~\ref{eq:success-bound}.
Rearranging with $p_\pi\geq 0$ gives the lower bound.
\end{proof}

\noindent\textbf{Offline state coverage.}
Evaluating the same local reverse KL under offline distributions $\mu_t$ requires $d_t^\pi\ll\mu_t$ and a finite density-ratio bound $\frac{d d_t^\pi}{d\mu_t}\leq C$ to recover the bound. Nonnegativity of local KL then gives $\mathcal{L}_{\mathrm{OPD}}(\pi)\leq C\mathcal{L}_{\mu}(\pi)$. No finite $C$ exists if absolute continuity fails. 

\noindent\textbf{Normalization and scope.}
For a fixed horizon, $\overline{\mathcal{L}}_{\mathrm{OPD}}=\mathcal{L}_{\mathrm{OPD}}/H$ gives $D_{\mathrm{KL}}(P_\pi\|P_T)=H\overline{\mathcal{L}}_{\mathrm{OPD}}$. Thus, $\overline{\mathcal{L}}_{\mathrm{OPD}}\leq\delta$ implies $p_\pi\geq[p_T-\sqrt{H\delta/2}]_+$. Normalization by realized trajectory length need not preserve this identity. The guarantee uses the population objective for the specified task, environment, verifier, and sampling policies. Certifying this premise from empirical estimates or under new decoding rules or evaluation distributions requires further error bounds.

\subsection{Reward Discovery and Reward-Informative Groups}

\begin{proof}[Proof of Corollary~\ref{cor:reward-discovery}]
For independent rollouts from the fixed policy $\pi_{\mathrm{w}}$, the first-success time $T_{\mathrm{succ}}$ is geometric with parameter $p_{\mathrm{w}}$. When $p_{\mathrm{w}}\geq\ell>0$,
\[
\Pr(T_{\mathrm{succ}}>N)=(1-p_{\mathrm{w}})^N
\leq(1-\ell)^N\leq e^{-N\ell},
\qquad
\mathbb{E}[T_{\mathrm{succ}}]=\frac{1}{p_{\mathrm{w}}}\leq\frac{1}{\ell}.
\]
Taking complements gives the success-probability bound. For $\delta\in(0,1)$, choosing $N\geq\lceil\log(1/\delta)/\ell\rceil$ makes the failure probability at most $\delta$.
\end{proof}

\noindent\textbf{Binary reward groups.}
Under binary rewards, a group of $G\geq2$ independent rollouts has all-success probability $p^G$, all-failure probability $(1-p)^G$, and mixed-group probability
\begin{equation}
 m_G(p)=1-p^G-(1-p)^G,
 \qquad
 m'_G(p)=G\big[(1-p)^{G-1}-p^{G-1}\big].
\label{eq:reward-contrast}
\end{equation}
This probability increases below $p=0.5$ and decreases above it. The lower bound $p_{\mathrm{w}}\geq\ell$ alone does not imply an increase in mixed groups, since it does not also ensure $p_{\mathrm{w}}\leq0.5$. All-success groups indicate reliable completion even though their direct GRPO advantages are zero. Empirical group fractions average these per-task quantities over tasks with different success probabilities. In ScienceWorld, partial-credit rewards can differ even without complete success.

\section{Training and Evaluation Protocol}
\label{app:settings}

\subsection{Data, Models, and Agent Interaction}

The students and teacher use released Qwen3 weights~\citep{yang2025qwen3} without additional agent-task training. Table~\ref{tab:training-settings} lists model sizes and shared settings. Chat templates disable thinking, while task prompts request \texttt{<reasoning>} and \texttt{<action>} fields. ALFWorld supplies the current observation and admissible actions. ScienceWorld supplies the task, observation, action templates, and objects. Both include the two most recent observation--action pairs when available.

\begingroup
\small
\setlength{\tabcolsep}{8pt}
\renewcommand{\arraystretch}{1.04}
\setlength{\LTcapwidth}{\linewidth}
\begin{longtable}{@{\hspace{4pt}}>{\raggedright\arraybackslash}p{142pt}>{\raggedright\arraybackslash}p{\dimexpr\linewidth-166pt\relax}@{\hspace{4pt}}}
\caption{Training and evaluation settings for the main warmup comparisons.}
\label{tab:training-settings} \\
\toprule
\textbf{Parameter} & \textbf{Value} \\
\midrule
\endfirsthead
\multicolumn{2}{c}{\tablename~\thetable\ (continued)} \\
\toprule
\textbf{Parameter} & \textbf{Value} \\
\midrule
\endhead
\bottomrule
\endfoot
\bottomrule
\endlastfoot
\rowcolor{TableHeader}
\multicolumn{2}{c}{\textbf{Models and training schedule}} \\*
Student & Qwen3-1.7B / Qwen3-4B \\*
Teacher & Qwen3-32B \\*
Environments & \textbf{ALFWorld} / \textbf{ScienceWorld} \\*
Main training length & 40 warmup + 60 GRPO steps \\*
Shorter warmup comparison & 20 warmup + 80 GRPO steps \\*
Teacher during GRPO & Not used \\
\addlinespace[3pt]
\rowcolor{TableHeader}
\multicolumn{2}{c}{\textbf{Objectives}} \\*
SFT / SFT-RS & Token cross-entropy on all / successful teacher trajectories \\*
OPD & Cross-entropy on teacher top-1 targets along student trajectories \\*
GRPO & Group-relative log-probability objective \\*
GRPO advantage normalization & Within-group mean and sample standard deviation, $\varepsilon=10^{-6}$ \\*
Probability-ratio clipping & Not used \\*
Additional KL / entropy coefficient & $0$ / $0$ \\
\addlinespace[3pt]
\rowcolor{TableHeader}
\multicolumn{2}{c}{\textbf{Optimization}} \\*
Optimizer & AdamW~\citep{loshchilov2019adamw}, $\beta=(0.9,0.999)$ \\*
Learning rate / schedule & $10^{-6}$ / constant \\*
Weight decay / gradient clipping & $0.01$ / norm $1.0$ \\*
Precision & bfloat16 \\*
Loss within a microbatch & Mean over response tokens \\
\addlinespace[3pt]
\rowcolor{TableHeader}
\multicolumn{2}{c}{\textbf{Batching and step counts}} \\*
OPD / GRPO batch & 16 tasks $\times$ 8 rollouts \\*
OPD / GRPO minibatch & 8 tasks $\times$ 8 rollouts \\*
Passes over each collected batch & 1 \\*
SFT / SFT-RS global batch & 128 teacher trajectories \\*
Optimizer steps per training step & SFT: 1, OPD / GRPO: 2 \\
\addlinespace[3pt]
\rowcolor{TableHeader}
\multicolumn{2}{c}{\textbf{Agent interaction}} \\*
Prompt / trajectory response limit & 4,096 / 24,576 tokens \\*
Response limit per turn & 512 tokens \\*
Maximum interaction length & 30 turns \\*
Recent interactions in the prompt & 2 observation--action pairs \\*
Chat-template thinking mode & Disabled \\
\addlinespace[3pt]
\rowcolor{TableHeader}
\multicolumn{2}{c}{\textbf{Sampling and evaluation}} \\*
Training temperature / top-$p$ & $1.0$ / $1.0$ \\*
Evaluation temperature / top-$p$ & $0.7$ / $0.95$ \\*
Learning-curve rollouts per task & 8 \\*
Rollouts per task for warmup success profile & 64 \\
\addlinespace[3pt]
\rowcolor{TableHeader}
\multicolumn{2}{c}{\textbf{Compute}} \\*
GPUs per training run & 8 \\*
Student / teacher placement & Shared eight-GPU pool for OPD \\*
Sequence parallel size & 4 \\
\end{longtable}
\endgroup

ALFWorld reserves 64 of its 3,553 training tasks for analysis, leaving disjoint warmup and GRPO pools of 1,744 and 1,745 tasks. ScienceWorld reserves 64 of 2,294 tasks and uses two training pools of 1,115. Both partitions use seed 20260722 and task-instance identities, allowing task types to overlap. ALFWorld Seen and Unseen use separate benchmark instances. ScienceWorld uses its reserved 64 tasks as Seen and the original 200 test tasks as Unseen. Complete-action evaluation uses 32 tasks per benchmark test split, excluding ALFWorld's 64 reserved tasks.

\subsection{Training Objectives and Optimization}
\label{app:training-protocol}

All warmup methods start from the same Base model. GRPO after warmup uses the second task pool and no teacher supervision. The GRPO warmup baseline also switches pools and loads its warmup weights into a new optimizer, as shown in Figs.~\ref{fig:teaser} and~\ref{fig:warmup-cross-environment}. Direct GRPO instead uses the second pool for all 100 steps without restarting, while direct OPD uses the warmup pool throughout. Extended ALFWorld comparisons run 100 or 280 GRPO steps after 40 warmup steps.

In ALFWorld 4B, SFT processes 5,120 teacher trajectories once over 40 steps, using 9,662,640 supervised response tokens. SFT-RS retains 2,375 successful trajectories and repeats them in a fixed order, processing 7.07 million tokens over 40 steps. OPW processes 9.76 million student-generated response tokens. All loss counts exclude prompts, environment observations, and padding.

In OPD and GRPO, each microbatch's response-token mean is weighted by its share of the minibatch's trajectories before gradient accumulation. In ALFWorld 4B SFT, each data-parallel rank accumulates 64 one-trajectory microbatches. Their token-normalized gradients are summed before clipping, with division by the microbatch count applied only to the logged loss. Axes count training steps, each covering one batch and its optimization. A 40-step SFT warmup followed by 60 GRPO steps contains 160 optimizer steps, compared with 200 for the OPW and GRPO-warmup paths.

\subsection{Evaluation and Statistical Summaries}
\label{app:evaluation-protocol}
\label{app:coverage}

Table~\ref{tab:evaluation-collections} lists the trajectory collections used for each analysis. Learning curves average eight rollouts per task and weight tasks equally. All eight rollouts come from the same trained checkpoint. Curves connect recorded evaluations without smoothing. A checkpoint can have several independently sampled evaluations, so each comparison uses its own initial evaluation.

\begin{table}[t]
\centering\small
\caption{Trajectory collections used for evaluation.}
\label{tab:evaluation-collections}
\setlength{\tabcolsep}{4pt}
\renewcommand{\arraystretch}{1.08}
\begin{tabular}{@{\hspace{3pt}}>{\raggedright\arraybackslash}p{122pt}c>{\raggedright\arraybackslash}p{192pt}@{\hspace{3pt}}}
\toprule
\textbf{Collection} & \textbf{Rollouts/task} & \textbf{Measurements} \\
\midrule
Original evaluations & 8 & Learning curves, final task performance and pass\textasciicircum{}8 \\
New warmup evaluations & 64 & Success coverage and repeated task success before GRPO \\
First eight of the new collection & 8 & Warmup action composition and standard OPW's initial score in supervision removal \\
ALFWorld warmup diagnostics & 32 & Validity and repeated task success on 64 reserved tasks \\
ScienceWorld warmup diagnostics & 8 & Complete success and partial-credit reward contrast on 64 Seen tasks \\
Duration comparisons & 8 & Final performance under different OPW + GRPO allocations \\
\bottomrule
\end{tabular}
\end{table}

\noindent\textbf{Early gain and average task performance.}
Let $R_t$ denote the task mean after $t$ GRPO steps following warmup, measured as task success in ALFWorld or task score in ScienceWorld on a 0--100 scale. Early gain is $\Delta R_{20}=R_{20}-R_0$. For recorded evaluations at $0=t_0<\cdots<t_m=T$, average task performance during GRPO is
\begin{equation}
\bar R_T=\frac{1}{T}\sum_{j=1}^{m}
(t_j-t_{j-1})\frac{R_{t_j}+R_{t_{j-1}}}{2}.
\label{eq:average-performance}
\end{equation}
This averages the recorded curve by the trapezoidal rule, with GRPO steps counted from the end of warmup. Full-training and duration comparisons count both stages. Target-reaching times use the first scheduled evaluation meeting the stated success target, without interpolation.

\noindent\textbf{Task-level success probabilities.}
For independent rollouts on a task $x$ with success probability $p_x$, pass@$k(x)=1-(1-p_x)^k$ measures at least one success, while pass\textasciicircum{}$k(x)=p_x^k$ measures success in all $k$ rollouts. Given $n$ rollouts with $c_x$ successes, we use~\citep{chen2021codex,yao2024taubench}
\begin{equation}
\widehat{\mathrm{pass@}k}(x)
=1-\frac{\binom{n-c_x}{k}}{\binom{n}{k}},
\qquad
\widehat{\text{pass\textasciicircum{}k}}(x)
=\frac{\binom{c_x}{k}}{\binom{n}{k}},
\label{eq:success-estimators}
\end{equation}
then average the estimates over tasks. A binomial coefficient is zero when its upper argument is below $k$. Table~\ref{tab:warmup-profile} uses $n=64$ and $k\in\{8,16,32,64\}$. At $k=n$, these are the fractions of tasks solved at least once and in every rollout. ScienceWorld complete success requires a terminal trajectory with normalized task reward one, excluding partial progress.

\section{Additional Warmup Comparisons}
\label{app:cross-environment}

\subsection{Full Warmup Results}

OPW has the highest overall repeated task success in both environments after 40 warmup and 60 GRPO steps. Table~\ref{tab:warmup-strategies-reliability} reports pass\textasciicircum{}8 by task type for the methods in Table~\ref{tab:warmup-strategies-mean}.

\begin{table*}[t]
\caption{Final repeated task success (pass\textasciicircum{}8, \%) after 40 warmup and 60 GRPO steps (Qwen3-4B).}
\label{tab:warmup-strategies-reliability}
\centering\fontsize{7.5}{9}\selectfont
\renewcommand{\arraystretch}{1.12}
\setlength{\tabcolsep}{1pt}
\arrayrulecolor{black}
\begin{tabular}{@{\hspace{3pt}}>{\raggedright\arraybackslash}p{31pt}>{\raggedright\arraybackslash}p{78pt}*{7}{>{\centering\arraybackslash}p{17.8pt}}|*{7}{>{\centering\arraybackslash}p{17.8pt}}@{\hspace{3pt}}}
\toprule
\multirow{2}{*}{\textbf{Split}} & \multirow{2}{78pt}{\diagbox[width=78pt,height=20pt]{\textbf{Warmup}}{\textbf{Task type}}} & \multicolumn{7}{c|}{\textbf{ALFWorld}} & \multicolumn{7}{c}{\textbf{ScienceWorld}} \\
\cmidrule(lr){3-9}\cmidrule(l){10-16}
& & Pick & Clean & Heat & Cool & Exam & Pick2 & All & Find & Life & Gene. & Mix & Prop. & Phys. & All \\
\midrule
\multirow{4}{*}{\textbf{Seen}} & {\fontsize{7.5}{9}\selectfont Off-policy: SFT} & 82.9 & 51.9 & \subbst{56.2} & 36.0 & 38.5 & 50.0 & 55.7 & 0.0 & \bst{25.0} & 0.0 & 0.0 & 0.0 & \subbst{9.5} & 4.7 \\
& {\fontsize{7.5}{9}\selectfont Off-policy: SFT-RS} & 91.4 & \subbst{70.4} & 50.0 & \subbst{44.0} & \subbst{69.2} & 54.2 & \subbst{65.7} & \subbst{12.5} & 0.0 & 0.0 & 0.0 & \subbst{7.4} & 4.8 & 6.2 \\
& {\fontsize{7.5}{9}\selectfont On-policy: GRPO} & \subbst{94.3} & 37.0 & 43.8 & 36.0 & 23.1 & \subbst{58.3} & 54.3 & \subbst{12.5} & 0.0 & 0.0 & 0.0 & 0.0 & \bst{19.0} & \subbst{7.8} \\
\rowcolor{OPDLightBlue}
\cellcolor{white} & {\fontsize{7.5}{9}\selectfont \textbf{OPW (Ours)}} & \bst{97.1} & \bst{92.6} & \bst{81.2} & \bst{80.0} & \bst{84.6} & \bst{83.3} & \bst{87.9} & \bst{62.5} & \bst{25.0} & \bst{50.0} & 0.0 & \bst{63.0} & \bst{19.0} & \bst{43.8} \\
\midrule
\multirow{4}{*}{\textbf{Unseen}} & {\fontsize{7.5}{9}\selectfont Off-policy: SFT} & \subbst{75.0} & 45.2 & 52.2 & 52.4 & 27.8 & 52.9 & 51.5 & \subbst{3.8} & 2.9 & \subbst{12.0} & 0.0 & 1.5 & \bst{26.3} & \subbst{6.0} \\
& {\fontsize{7.5}{9}\selectfont Off-policy: SFT-RS} & \bst{87.5} & \subbst{64.5} & \subbst{65.2} & \subbst{76.2} & \subbst{55.6} & \subbst{58.8} & \subbst{68.7} & 1.9 & \subbst{8.6} & 4.0 & 0.0 & \subbst{3.1} & 10.5 & 4.5 \\
& {\fontsize{7.5}{9}\selectfont On-policy: GRPO} & 70.8 & 29.0 & 34.8 & 28.6 & 22.2 & 17.6 & 35.1 & \bst{7.7} & 0.0 & \subbst{12.0} & 0.0 & 1.5 & \subbst{15.8} & 5.5 \\
\rowcolor{OPDLightBlue}
\cellcolor{white} & {\fontsize{7.5}{9}\selectfont \textbf{OPW (Ours)}} & \bst{87.5} & \bst{100.0} & \bst{82.6} & \bst{95.2} & \bst{77.8} & \bst{82.4} & \bst{88.8} & \subbst{3.8} & \bst{14.3} & \bst{16.0} & 0.0 & \bst{4.6} & \subbst{15.8} & \bst{8.5} \\
\bottomrule
\end{tabular}
\end{table*}

Across environments, model sizes, and splits, OPW has the highest average task performance over 60 GRPO steps (Table~\ref{tab:warmup-cross-environment-summary}). GRPO warmup yields a larger early gain on ScienceWorld 1.7B Unseen.

\begin{table}[htbp]
\caption{Early gains and average task performance during 60 GRPO steps after warmup.}
\label{tab:warmup-cross-environment-summary}
\centering\fontsize{8.5}{10}\selectfont
\setlength{\tabcolsep}{2pt}
\renewcommand{\arraystretch}{1.12}
\arrayrulecolor{black}
\begin{tabular}{@{\hspace{3pt}}>{\raggedright\arraybackslash}p{88pt}*{4}{>{\centering\arraybackslash}p{31pt}}|*{4}{>{\centering\arraybackslash}p{31pt}}@{\hspace{3pt}}}
\toprule
\multirow{3}{*}{\diagbox[width=88pt,height=32pt]{\textbf{Warmup}}{\textbf{Benchmark}}} & \multicolumn{4}{c|}{\textbf{ALFWorld}} & \multicolumn{4}{c}{\textbf{ScienceWorld}} \\
\cmidrule(lr){2-5}\cmidrule(lr){6-9}
& \multicolumn{2}{c}{Seen} & \multicolumn{2}{c|}{Unseen} & \multicolumn{2}{c}{Seen} & \multicolumn{2}{c}{Unseen} \\
\cmidrule(lr){2-3}\cmidrule(lr){4-5}\cmidrule(lr){6-7}\cmidrule(lr){8-9}
& $\Delta R_{20}$ & $\bar R_{60}$ & $\Delta R_{20}$ & $\bar R_{60}$ & $\Delta R_{20}$ & $\bar R_{60}$ & $\Delta R_{20}$ & $\bar R_{60}$ \\
\midrule
\rowcolor{TableHeader}
\multicolumn{9}{c}{\textbf{Qwen3-1.7B}} \\
{\fontsize{7.5}{9}\selectfont Off-policy: SFT} & +9.7 & 23.6 & +7.5 & 25.8 & +0.5 & 13.6 & +2.2 & 18.7 \\
{\fontsize{7.5}{9}\selectfont Off-policy: SFT-RS} & +6.8 & 20.7 & +5.0 & 21.1 & +3.2 & 12.9 & +4.6 & 19.4 \\
{\fontsize{7.5}{9}\selectfont On-policy: GRPO} & +7.8 & 37.2 & +6.8 & 37.8 & +13.7 & 28.1 & \textbf{+14.0} & 34.1 \\
\rowcolor{OPDLightBlue}
{\fontsize{7.5}{9}\selectfont \textbf{OPW (Ours)}} & \textbf{+26.9} & \textbf{73.7} & \textbf{+19.6} & \textbf{71.9} & \textbf{+14.2} & \textbf{62.0} & +5.6 & \textbf{58.1} \\
\midrule
\rowcolor{TableHeader}
\multicolumn{9}{c}{\textbf{Qwen3-4B}} \\
{\fontsize{7.5}{9}\selectfont Off-policy: SFT} & +14.1 & 56.5 & +15.0 & 53.8 & +15.9 & 43.5 & +17.1 & 51.4 \\
{\fontsize{7.5}{9}\selectfont Off-policy: SFT-RS} & +8.7 & 56.3 & +7.3 & 54.5 & \textbf{+22.8} & 47.4 & \textbf{+19.5} & 53.1 \\
{\fontsize{7.5}{9}\selectfont On-policy: GRPO} & +11.4 & 69.5 & +11.9 & 65.0 & +0.9 & 57.0 & +0.2 & 59.4 \\
\rowcolor{OPDLightBlue}
{\fontsize{7.5}{9}\selectfont \textbf{OPW (Ours)}} & \textbf{+41.8} & \textbf{84.7} & \textbf{+31.4} & \textbf{86.6} & +22.6 & \textbf{78.9} & +3.5 & \textbf{65.2} \\
\bottomrule
\end{tabular}
\end{table}

\subsection{Sensitivity to Training Length and Seed}
\label{app:shorter-warmup}
\label{app:longer-rlvr}
\label{app:training-dynamics}
\label{app:training-seeds}

\noindent\textbf{Shorter warmup.}
With 20 warmup and 80 GRPO steps, OPW retains the highest final ALFWorld 4B success and pass\textasciicircum{}8 on both splits (Table~\ref{tab:warmup-task-types-20}). Fig.~\ref{fig:warmup-cross-environment-20} uses the main comparison's task sets and Base evaluations, counting total steps. Direct OPD is evaluated every 20 steps through step 100.

\begin{table}[htbp]
\centering\small
\caption{Final task performance and repeated task success for 20-step warmup (ALFWorld 4B).}
\label{tab:warmup-task-types-20}
\setlength{\tabcolsep}{6pt}
\renewcommand{\arraystretch}{1.12}
\begin{tabular}{@{\hspace{6pt}}lcc|cc@{\hspace{6pt}}}
\toprule
\multirow{2}{*}{\textbf{Warmup}} & \multicolumn{2}{c|}{\textbf{Seen}} & \multicolumn{2}{c}{\textbf{Unseen}} \\
\cmidrule(lr){2-3}\cmidrule(lr){4-5}
& pass@1 & pass\textasciicircum{}8 & pass@1 & pass\textasciicircum{}8 \\
\midrule
SFT & \subbst{91.2} & \subbst{72.1} & 83.3 & \subbst{63.4} \\
SFT-RS & 89.1 & 68.6 & \subbst{83.5} & 59.0 \\
GRPO & 72.3 & 35.0 & 69.9 & 26.1 \\
\rowcolor{OPDLightBlue}
\textbf{OPW (Ours)} & \bst{95.7} & \bst{85.0} & \bst{93.3} & \bst{76.9} \\
\bottomrule
\end{tabular}
\end{table}

\begin{figure}[t]
\centering
\includegraphics[width=\textwidth]{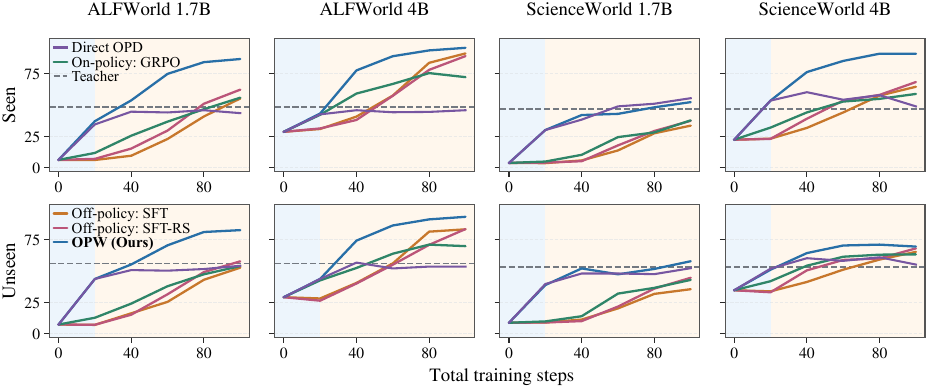}
\caption{Task performance during 20 warmup and 80 GRPO steps.}
\label{fig:warmup-cross-environment-20}
\end{figure}

\noindent\textbf{Longer GRPO training.}
After 40 warmup and 100 GRPO steps, SFT-RS approaches OPW's final Seen success (95.2\% versus 96.2\%), but its average GRPO success remains lower (71.0\% versus 89.4\%). Unseen averages are 68.3\% and 90.9\%, respectively (Table~\ref{tab:warmup-learning}). Fig.~\ref{fig:warmup-long-training} extends GRPO to 280 steps and shows sustained high success after OPW.

\begin{figure*}[t]
\centering
\includegraphics[width=\textwidth]{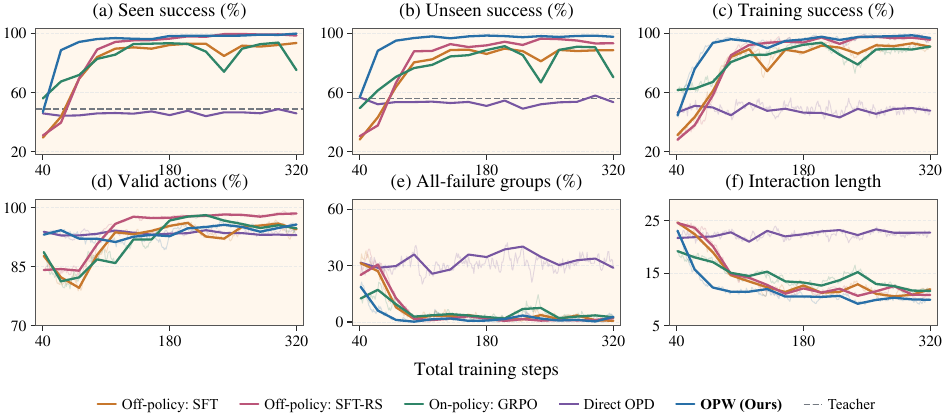}
\caption{Task performance (a,b) and training dynamics (c--f) during extended training.}
\label{fig:warmup-long-training}
\end{figure*}

\begin{table}[htbp]
\caption{Average and final task performance over 100 GRPO steps after warmup (ALFWorld 4B).}
\label{tab:warmup-learning}
\par\smallskip
\centering\fontsize{8.5}{10}\selectfont
\setlength{\tabcolsep}{2pt}
\renewcommand{\arraystretch}{1.12}
\begin{tabular}{@{\hspace{3pt}}>{\raggedright\arraybackslash}p{88pt}*{2}{>{\centering\arraybackslash}p{34pt}}|*{2}{>{\centering\arraybackslash}p{34pt}}@{\hspace{3pt}}}
\toprule
\multirow{2}{88pt}{\diagbox[width=88pt,height=20pt]{\textbf{Warmup}}{\textbf{Task set}}} & \multicolumn{2}{c|}{Seen} & \multicolumn{2}{c}{Unseen} \\
\cmidrule(lr){2-3}\cmidrule(lr){4-5}
& $\bar R_{100}$ & $R_{100}$ & $\bar R_{100}$ & $R_{100}$ \\
\midrule
{\fontsize{7.5}{9}\selectfont Off-policy: SFT} & 69.3 & 90.4 & 65.6 & 88.6 \\
{\fontsize{7.5}{9}\selectfont Off-policy: SFT-RS} & 71.0 & 95.2 & 68.3 & 92.6 \\
{\fontsize{7.5}{9}\selectfont On-policy: GRPO} & 76.3 & 92.7 & 70.8 & 84.3 \\
\rowcolor{OPDLightBlue}
{\fontsize{7.5}{9}\selectfont \textbf{OPW (Ours)}} & \textbf{89.4} & \textbf{96.2} & \textbf{90.9} & \textbf{96.5} \\
\bottomrule
\end{tabular}
\end{table}

In Fig.~\ref{fig:warmup-long-training}, test panels (a,b) use unsmoothed evaluations every 20 steps, with eight rollouts on 140 Seen and 134 Unseen tasks. Training panels (c)--(f) use 16 tasks with eight rollouts per step. Validity pools recorded actions, reward-group fractions average task groups, and interaction length counts action turns. Training curves use exponential moving averages with coefficient 0.2, while interval means weight raw steps equally. Direct OPD remains on the warmup pool over the same total steps.

OPW's all-success group fraction rises from 44.7\% to 72.8\% between the first two 20-step GRPO intervals. These groups indicate reliable completion but have zero direct GRPO advantages. Over the final 100 steps, OPW averages 97.3\% training success with 1.6\% all-failure groups, compared with 86.7\% and 4.0\% after GRPO warmup.

\noindent\textbf{Second training seed.}
Both ALFWorld 4B seeds use 40 warmup and 100 GRPO steps, with eight rollouts per task on both test splits. Table~\ref{tab:warmup-seeds} reports OPW minus GRPO warmup for initial success $R_0$, early gain $\Delta R_{20}$, and average success $\bar R_{100}$. Under both seeds, OPW starts lower on Seen and higher on Unseen, then achieves larger early gains and higher averages on both splits.

\begin{table}[htbp]
\caption{OPW minus GRPO warmup under two training seeds (percentage points).}
\label{tab:warmup-seeds}
\centering\small
\setlength{\tabcolsep}{6pt}
\renewcommand{\arraystretch}{1.12}
\begin{tabular}{@{\hspace{3pt}}>{\centering\arraybackslash}p{28pt}*{3}{>{\centering\arraybackslash}p{44pt}}|*{3}{>{\centering\arraybackslash}p{44pt}}@{\hspace{3pt}}}
\toprule
\multirow{2}{*}{\textbf{Seed}} & \multicolumn{3}{c|}{\textbf{Seen}} & \multicolumn{3}{c}{\textbf{Unseen}} \\
\cmidrule(lr){2-4}\cmidrule(lr){5-7}
& $R_0$ & $\Delta R_{20}$ & $\bar R_{100}$ & $R_0$ & $\Delta R_{20}$ & $\bar R_{100}$ \\
\midrule
1 & -9.2 & +30.4 & +13.0 & +7.3 & +19.5 & +20.1 \\
2 & -0.8 & +17.2 & +6.6 & +6.4 & +9.0 & +9.5 \\
\bottomrule
\end{tabular}
\end{table}

\subsection{Training Computation and Target-Reaching Cost}
\label{app:training-cost}

We multiply recorded training times by eight GPUs for each path with 40 warmup and 60 GRPO steps in Fig.~\ref{fig:warmup-cross-environment}. Timers include trajectory generation, environment interaction, teacher scoring, and student optimization. SFT and SFT-RS include 10.9 GPU-hours for collecting 5,120 shared teacher trajectories before filtering. Separate evaluations, research diagnostics, checkpoint saving, and startup are excluded. Student training and teacher scoring share the eight-GPU allocation, with fully sharded data parallelism, sequence-parallel size four, and data-parallel size two. Supplementary runs use L20X GPUs. Unknown GPU models in earlier runs prevent hardware normalization.

\begin{table}[htbp]
\centering
\caption{Training cost and cost to reach target Seen success in ALFWorld (GPU-hours).}
\label{tab:training-cost-targets}
\label{tab:training-cost}
\fontsize{8.5}{10.5}\selectfont
\setlength{\tabcolsep}{3pt}
\renewcommand{\arraystretch}{1.08}
\begin{tabular}{@{\hspace{3pt}}>{\raggedright\arraybackslash}p{100pt}*{6}{>{\centering\arraybackslash}p{38pt}}@{\hspace{3pt}}}
\toprule
\multirow{2}{100pt}{\diagbox[width=100pt,height=20pt]{\textbf{Warmup}}{\textbf{Cost}}} & \multicolumn{4}{c}{100 training steps} & \multicolumn{2}{c}{To reach} \\
\cmidrule(lr){2-5}\cmidrule(lr){6-7}
& \shortstack{Data\\collection} & \textbf{Warmup} & GRPO & Total & $>80\%$ & $>90\%$ \\
\midrule
\rowcolor{TableHeader}
\multicolumn{7}{c}{\textbf{Qwen3-1.7B}} \\
{\fontsize{7.5}{9}\selectfont Off-policy: SFT} & 10.9 & 2.5 & 14.1 & 27.5 & -- & -- \\
{\fontsize{7.5}{9}\selectfont Off-policy: SFT-RS} & 10.9 & 1.9 & 13.9 & 26.7 & -- & -- \\
{\fontsize{7.5}{9}\selectfont On-policy: GRPO} & 0.0 & 9.3 & 12.9 & \textbf{22.2} & -- & -- \\
\rowcolor{OPDLightBlue}
{\fontsize{7.5}{9}\selectfont \textbf{OPW (Ours)}} & 0.0 & 12.8 & 11.1 & 23.9 & \textbf{20.3} & -- \\
\midrule
\rowcolor{TableHeader}
\multicolumn{7}{c}{\textbf{Qwen3-4B}} \\
{\fontsize{7.5}{9}\selectfont Off-policy: SFT} & 10.9 & 4.8 & 15.7 & 31.4 & 31.4 & -- \\
{\fontsize{7.5}{9}\selectfont Off-policy: SFT-RS} & 10.9 & 3.2 & 16.8 & 30.9 & 30.9 & -- \\
{\fontsize{7.5}{9}\selectfont On-policy: GRPO} & 0.0 & 13.0 & 16.6 & 29.6 & 29.6 & -- \\
\rowcolor{OPDLightBlue}
{\fontsize{7.5}{9}\selectfont \textbf{OPW (Ours)}} & 0.0 & 14.9 & 12.4 & \textbf{27.3} & \textbf{19.6} & \textbf{23.6} \\
\bottomrule
\multicolumn{7}{@{\hspace{3pt}}l@{}}{\fontsize{7.5}{9}\selectfont --: target not reached within 100 steps.}
\end{tabular}
\end{table}

OPW reduces total cost relative to GRPO warmup at 4B (27.3 versus 29.6 GPU-hours) but increases it at 1.7B (23.9 versus 22.2). Table~\ref{tab:training-cost-targets} uses the first evaluation at total step 40, 60, 80, or 100 exceeding the target, without interpolation. At 4B, OPW reaches both targets at the same evaluations on Seen and Unseen. GRPO warmup exceeds 80\% only on Seen. At 1.7B, only OPW exceeds 80\%, at step 80 on Seen and step 100 on Unseen. Only 4B OPW exceeds 90\% within 100 steps.

\section{Teacher Supervision on Student-Generated Trajectories}
\label{app:supervision}

\subsection{Teacher-Target Probabilities after One Optimizer Step}
\label{app:token-update}

From Base, we take one AdamW step with learning rate $10^{-6}$, using 128 student trajectories for OPD/GRPO or 128 teacher trajectories for SFT. These batches contain 329,646 and 257,068 response tokens, respectively. OPD applies $\ell_t=-\log\pi_\theta(v_t\mid h_t)$ at student token histories $h_t$, with teacher top-1 targets $v_t$. SFT uses this loss with teacher tokens on teacher trajectories. We score log-probability changes at common histories where teacher targets disagree with student tokens.

The batch has 16 tasks and 22,279 disagreements. Eight groups contain only failures, six have mixed rewards, and two contain only successes. Table~\ref{tab:one-update-corrections} averages changes over all disagreements, 11,839 positions in 64 all-failure trajectories, and 4,455 positions with the lowest Base probability for the teacher target. OPD and GRPO use all 16 task groups in one minibatch, compared with eight per minibatch in main training. In all-failure groups, SFT changes probabilities through transfer from teacher trajectories, while OPD supervises the student histories directly. GRPO advantages are zero in these groups, but other groups can change their probabilities through shared parameters. Across the batch, disagreements occupy 6.8\% of response positions but account for 97.7\% of teacher-target loss and 95.2\% of the positive log-probability change after OPD.

\begin{table}[htbp]
\centering\small
\caption{Teacher-target log-probability changes after one optimizer step (ALFWorld 4B).}
\label{tab:one-update-corrections}
\setlength{\tabcolsep}{7pt}
\renewcommand{\arraystretch}{1.12}
\begin{tabular}{@{\hspace{7pt}}lccc@{\hspace{7pt}}}
\toprule
\textbf{Objective} & \makecell{All\\disagreements} & \makecell{All-failure\\groups} & \makecell{Lowest-probability\\20\%} \\
\midrule
SFT & $+0.1279$ & $+0.128$ & $+0.290$ \\
GRPO & $-0.0004$ & $+0.004$ & $+0.017$ \\
\rowcolor{OPDLightBlue}
OPD & $\mathbf{+0.1812}$ & $\mathbf{+0.183}$ & $\mathbf{+0.392}$ \\
\bottomrule
\end{tabular}
\end{table}

\subsection{Resampling Student Trajectories}
\label{app:student-resampling}

Both ALFWorld Qwen3-4B conditions use teacher top-1 cross-entropy for 40 warmup steps, followed by 60 teacher-free GRPO steps. The fixed version reuses initial-student trajectories while recomputing teacher targets and current student probabilities. OPW resamples trajectories as the student changes. Evaluations use eight rollouts per task on 140 Seen and 134 Unseen tasks. Table~\ref{tab:student-resampling-checkpoints} reports the evaluations underlying Table~\ref{tab:student-resampling-summary}. Resampling improves early gains and average success, while fixed trajectories yield higher final Seen success. Resampling jointly changes histories, diversity, and task coverage.

\begin{table}[htbp]
\centering\small
\caption{Task performance after warmup on fixed or resampled trajectories (ALFWorld 4B).}
\label{tab:student-resampling-checkpoints}
\setlength{\tabcolsep}{6pt}
\renewcommand{\arraystretch}{1.12}
\begin{tabular}{@{\hspace{6pt}}llcccc@{\hspace{6pt}}}
\toprule
\multirow{2}{*}{\textbf{Task set}} & \multirow{2}{*}{\textbf{Trajectories}} & \multicolumn{4}{c}{\textbf{GRPO steps after warmup}} \\
\cmidrule(lr){3-6}
& & 0 & 20 & 40 & 60 \\
\midrule
\textbf{Seen} & Fixed & 41.1 & 79.6 & 94.4 & 97.4 \\
\rowcolor{OPDLightBlue}
\cellcolor{white} & \textbf{OPW (Ours)} & 46.8 & 88.6 & 94.1 & 96.1 \\
\midrule
\textbf{Unseen} & Fixed & 50.5 & 79.7 & 91.7 & 95.1 \\
\rowcolor{OPDLightBlue}
\cellcolor{white} & \textbf{OPW (Ours)} & 56.7 & 88.2 & 95.0 & 96.7 \\
\bottomrule
\end{tabular}
\end{table}

\subsection{Removing Supervision at Repeated Invalid Actions}
\label{app:error-supervision-removal}

Two ALFWorld Qwen3-4B runs share OPW's initialization, tasks, teacher, batch size, seed, and optimizer for 40 warmup and 60 GRPO steps. Targeted removal zeros teacher-loss weights on action-content tokens when the action is invalid and has appeared earlier in the trajectory. Inputs, targets, and loss normalization stay unchanged. Random removal selects complete actions with the same token-length counts as repeated invalid actions in its current batch, allowing overlap. Branches generate their own trajectories, so total removal counts can differ. Targeted removal excludes 34,536 tokens in 6,489 actions (0.354\% of response tokens), versus 32,132 tokens in 5,992 actions (0.330\%) under random removal. Random removal includes 7,199 tokens in 1,474 repeated invalid actions.

Targeted and random removal yield similar warmup validity on Seen (93.8\% versus 93.4\%) and Unseen (93.5\% versus 93.7\%). Yet targeted removal produces smaller early GRPO gains on both splits (Table~\ref{tab:error-supervision-learning}), supporting the contribution of supervision at repeated errors. All conditions use eight rollouts per task on 140 Seen and 134 Unseen tasks at 0, 20, 40, and 60 GRPO steps. OPW's initial evaluation uses the first eight of the new 64-rollout collection, explaining differences from the main summary. Later OPW evaluations use the original collections. The table reports early gain $\Delta R_{20}$ in percentage points, average success $\bar R_{60}$ from Eq.~\ref{eq:average-performance}, and final success $R_{60}$ in percent.

\begin{table}[htbp]
\caption{GRPO learning after targeted and random supervision removal (ALFWorld 4B).}
\label{tab:error-supervision-learning}
\par\smallskip
\centering\fontsize{8.5}{10}\selectfont
\setlength{\tabcolsep}{2pt}
\renewcommand{\arraystretch}{1.12}
\begin{tabular}{@{\hspace{3pt}}>{\raggedright\arraybackslash}p{88pt}*{3}{>{\centering\arraybackslash}p{33pt}}|*{3}{>{\centering\arraybackslash}p{33pt}}@{\hspace{3pt}}}
\toprule
\multirow{2}{88pt}{\diagbox[width=88pt,height=20pt]{\textbf{Warmup}}{\textbf{Task set}}} & \multicolumn{3}{c|}{Seen} & \multicolumn{3}{c}{Unseen} \\
\cmidrule(lr){2-4}\cmidrule(lr){5-7}
& $\Delta R_{20}$ & $\bar R_{60}$ & $R_{60}$ & $\Delta R_{20}$ & $\bar R_{60}$ & $R_{60}$ \\
\midrule
\rowcolor{OPDLightBlue}
{\fontsize{7.5}{9}\selectfont \textbf{OPW (Ours)}} & 41.5 & 84.7 & 96.1 & 32.2 & \textbf{86.5} & \textbf{96.7} \\
Random removal & \textbf{42.0} & \textbf{84.8} & \textbf{97.1} & \textbf{35.2} & 82.9 & 94.3 \\
Targeted removal & 34.6 & 80.1 & 95.4 & 29.2 & 80.9 & 92.9 \\
\bottomrule
\end{tabular}
\end{table}

\section{From Teacher Targets to Action Value}
\label{app:action-value-analysis}

\subsection{Teacher Alternatives, Task Value, and Retention}
\label{app:retained-alternatives}
\label{app:expanded-action-value}

In ALFWorld Qwen3-4B, we test 311 action-token alternatives from 97 tasks. We inspect eight Base rollouts per task from the first 128 distinct training tasks in trajectory order. Eligible positions have Base probability of at least 0.9 for the original token and a different teacher top-1 token, with both tokens containing letters or digits. We retain 21 positions from an eight-task analysis and add up to four distinct token histories per task, without using continuation outcomes or trained-policy probabilities for selection. At each position, we choose either token and let Base complete the action and remaining trajectory. Four continuations per choice yield 2,488 rollouts. We average continuations within positions, positions within tasks, and tasks equally. Teacher alternatives raise action validity from 67.44\% to 82.22\% and continuation success from 26.48\% to 29.70\%. On the 89 added tasks, continuation success still improves by 2.39 percentage points.

We track teacher-target probabilities at these action positions and on a separate set of 39,490 disagreements from 256 student trajectories across 32 tasks (Table~\ref{tab:warmup-choice-persistence}). All checkpoints use the same token histories and teacher top-1 targets. Probabilities use the full softmax before temperature scaling or top-$p$ truncation, averaged within tasks and then equally across tasks. Over 40 teacher-free GRPO steps, OPW's mean probability rises from 39.3\% to 47.2\% at the selected action positions. On the broader set, it remains near its warmup level (43.8\% to 43.2\%) and highest on all 32 tasks. Selected action positions thus show further gains, while the broader set shows retention. The table reports warmup probabilities $P_0$ in percent and changes $\Delta P_t=P_t-P_0$ in percentage points.

\begin{table}[htbp]
\caption{Teacher-target probabilities during teacher-free GRPO (ALFWorld 4B).}
\label{tab:warmup-choice-persistence}
\par\smallskip
\centering\fontsize{8.5}{10}\selectfont
\setlength{\tabcolsep}{2pt}
\renewcommand{\arraystretch}{1.12}
\begin{tabular}{@{\hspace{3pt}}>{\raggedright\arraybackslash}p{88pt}*{3}{>{\centering\arraybackslash}p{34pt}}|*{3}{>{\centering\arraybackslash}p{34pt}}@{\hspace{3pt}}}
\toprule
\multirow{2}{88pt}{\diagbox[width=88pt,height=20pt]{\textbf{Warmup}}{\textbf{Positions}}} & \multicolumn{3}{c|}{All disagreements} & \multicolumn{3}{c}{Action positions} \\
\cmidrule(lr){2-4}\cmidrule(lr){5-7}
& $P_0$ & $\Delta P_{20}$ & $\Delta P_{40}$ & $P_0$ & $\Delta P_{20}$ & $\Delta P_{40}$ \\
\midrule
{\fontsize{7.5}{9}\selectfont Off-policy: SFT} & 17.7 & $+0.3$ & $+1.2$ & 3.8 & $+2.8$ & $+6.2$ \\
{\fontsize{7.5}{9}\selectfont Off-policy: SFT-RS} & 17.9 & $-1.4$ & $+1.0$ & 3.7 & $+4.7$ & $+14.8$ \\
{\fontsize{7.5}{9}\selectfont On-policy: GRPO} & 17.7 & $+1.6$ & $-0.9$ & 12.5 & $+6.2$ & $+8.3$ \\
\rowcolor{OPDLightBlue}
{\fontsize{7.5}{9}\selectfont \textbf{OPW (Ours)}} & \textbf{43.8} & $-0.2$ & $-0.6$ & \textbf{39.3} & $+6.1$ & $+7.9$ \\
\bottomrule
\end{tabular}
\end{table}

\subsection{Complete-Action Value at Common Histories}
\label{app:complete-action-value}

We take one trajectory per warmup method from each of 32 Seen and 32 Unseen tasks and inspect positions after four and twelve actions. Checkpoints share 512 records, including 447 active positions from 63 tasks and 65 records from already-ended trajectories. Each policy samples four actions per active position. Base supplies four continuations per distinct position--action pair, reusing continuations for identical actions at their sampled frequencies. We average positions and sources within tasks, then weight tasks equally. Ended records retain their outcomes only in all-position success.

We replay actions in a reset game, checking observations and admissible actions. Base receives only the sampled \texttt{<action>} field. Actions and continuations use temperature 0.7, top-$p=0.95$, a 1,024-token response limit, a 32,768-token context limit, and at most 30 interaction turns. Table~\ref{tab:sampled-action-value} reports validity $V_t$ and continuation success $C_t$ (\%) after $t$ GRPO steps. Gains are in percentage points, with $\Delta C_{40}=C_{40}-C_0$.

\begin{table}[htbp]
\caption{Complete-action validity and continuation success under Base (ALFWorld 4B).}
\label{tab:sampled-action-value}
\centering\fontsize{8.5}{10}\selectfont
\setlength{\tabcolsep}{2pt}
\renewcommand{\arraystretch}{1.12}
\begin{tabular}{@{\hspace{3pt}}>{\raggedright\arraybackslash}p{88pt}*{4}{>{\centering\arraybackslash}p{30pt}}|*{4}{>{\centering\arraybackslash}p{30pt}}@{\hspace{3pt}}}
\toprule
\multirow{3}{88pt}{\diagbox[width=88pt,height=32pt]{\textbf{Warmup}}{\textbf{Metric}}} & \multicolumn{4}{c|}{Active positions} & \multicolumn{4}{c}{All positions: success} \\
\cmidrule(lr){2-5}\cmidrule(lr){6-9}
& \multicolumn{2}{c}{Validity} & \multicolumn{2}{c|}{Success} & \multicolumn{2}{c}{Seen} & \multicolumn{2}{c}{Unseen} \\
\cmidrule(lr){2-3}\cmidrule(lr){4-5}\cmidrule(lr){6-7}\cmidrule(lr){8-9}
& $V_0$ & $V_{40}$ & $C_0$ & $C_{40}$ & $C_0$ & $\Delta C_{40}$ & $C_0$ & $\Delta C_{40}$ \\
\midrule
{\fontsize{7.5}{9}\selectfont Off-policy: SFT} & 92.57 & 93.68 & 44.01 & 45.16 & 41.89 & +0.61 & 51.56 & +1.20 \\
{\fontsize{7.5}{9}\selectfont Off-policy: SFT-RS} & 92.64 & 93.74 & 44.40 & 45.56 & 42.33 & +0.42 & 51.78 & +1.34 \\
{\fontsize{7.5}{9}\selectfont On-policy: GRPO} & 93.92 & 93.78 & \textbf{45.93} & 46.47 & \textbf{43.60} & +0.05 & \textbf{52.93} & +0.93 \\
\rowcolor{OPDLightBlue}
{\fontsize{7.5}{9}\selectfont \textbf{OPW (Ours)}} & \textbf{94.45} & \textbf{94.61} & 45.00 & \textbf{46.75} & 42.26 & \textbf{+1.68} & 52.61 & \textbf{+1.51} \\
\bottomrule
\end{tabular}
\end{table}

At active positions, OPW starts with the highest validity (94.45\%) but lower continuation success than GRPO warmup (45.00\% versus 45.93\%). Over 40 GRPO steps, its continuation success rises to 46.75\%, compared with 46.47\% after GRPO warmup. Including already-ended records gives the same ordering of gains, at 1.60 versus 0.49 points, with similar final means. Since Base supplies every continuation, these gains reflect improved action selection at the evaluated histories.

\section{Behavior after Warmup and during GRPO}
\label{app:behavior}

\subsection{Execution Errors and Repetition}
\label{app:error-events}
\label{app:behavior-cross-environment}

The Qwen3-4B analysis uses the first eight trajectories per task on 140 Seen and 134 Unseen ALFWorld tasks and 64 Seen and 200 Unseen ScienceWorld tasks. Base and Teacher have one evaluation each. The four warmup methods are evaluated after warmup and after 20, 40, and 60 GRPO steps. Table~\ref{tab:warmup-action-composition} uses the warmup evaluations from the new 64-rollout collection. Tasks receive equal weight.

We lowercase action text and normalize whitespace. An action repeats if it appeared earlier, whether or not that occurrence was valid. Validity requires membership in the current valid-action list. ScienceWorld navigation aliases absent from that list count as invalid even if execution succeeds. Empty or unparseable actions are invalid format failures. Empty actions are excluded from repetition. Consecutive identical invalid actions require both adjacent actions to be invalid and identical, forming a subset of repeated invalid actions. Event shares divide by all actions in each task's eight trajectories. Error incidence is the fraction of trajectories containing an invalid action.

On ALFWorld Seen, similar fractions of SFT and OPW trajectories contain invalid actions (50.9\% and 50.0\%), but consecutive identical invalid actions occupy 3.93\% and 0.66\% of all actions. On ScienceWorld Unseen, the consecutive-error share is 13.83\% for SFT and 4.17\% for OPW. Appendix~\ref{app:trajectory-cases} illustrates these patterns with complete action sequences.

Table~\ref{tab:behavior-cross-environment} compares both model sizes using the original eight-rollout evaluations. Its initial means can differ slightly from the new warmup collection. $V_0$, $L_0$, and $R_0$ denote warmup validity, repetition, and task performance, with $\Delta X=X_{60}-X_0$. Repetition includes valid actions, and ScienceWorld task scores include partial progress. For ALFWorld 4B, OPW's validity rises by only 0.9 points during GRPO while success rises by 49.3, showing continued progress after most actions are valid.

\begingroup
\fontsize{8.5}{10}\selectfont
\setlength{\tabcolsep}{1.5pt}
\renewcommand{\arraystretch}{1.12}
\setlength{\LTcapwidth}{\linewidth}
\begin{longtable}{@{\hspace{3pt}}>{\raggedright\arraybackslash}p{80pt}*{6}{>{\centering\arraybackslash}p{22.5pt}}|*{6}{>{\centering\arraybackslash}p{22.5pt}}@{\hspace{3pt}}}
\caption{\small Execution behavior and task performance before and after 60 GRPO steps.}
\label{tab:behavior-cross-environment} \\
\toprule
\multirow{3}{80pt}{\textbf{Warmup}} & \multicolumn{6}{c|}{\textbf{Qwen3-1.7B}} & \multicolumn{6}{c}{\textbf{Qwen3-4B}} \\
\cmidrule(lr){2-7}\cmidrule(lr){8-13}
& \multicolumn{2}{c}{\makecell{Action\\validity}} & \multicolumn{2}{c}{\makecell{Repeated\\actions}} & \multicolumn{2}{c|}{\makecell{Task\\performance}} & \multicolumn{2}{c}{\makecell{Action\\validity}} & \multicolumn{2}{c}{\makecell{Repeated\\actions}} & \multicolumn{2}{c}{\makecell{Task\\performance}} \\
\cmidrule(lr){2-3}\cmidrule(lr){4-5}\cmidrule(lr){6-7}\cmidrule(lr){8-9}\cmidrule(lr){10-11}\cmidrule(lr){12-13}
& $V_0$ & $\Delta V$ & $L_0$ & $\Delta L$ & $R_0$ & $\Delta R$ & $V_0$ & $\Delta V$ & $L_0$ & $\Delta L$ & $R_0$ & $\Delta R$ \\
\midrule
\endfirsthead
\multicolumn{13}{c}{\tablename~\thetable\ (continued)} \\
\toprule
\multirow{3}{80pt}{\textbf{Warmup}} & \multicolumn{6}{c|}{\textbf{Qwen3-1.7B}} & \multicolumn{6}{c}{\textbf{Qwen3-4B}} \\
\cmidrule(lr){2-7}\cmidrule(lr){8-13}
& \multicolumn{2}{c}{\makecell{Action\\validity}} & \multicolumn{2}{c}{\makecell{Repeated\\actions}} & \multicolumn{2}{c|}{\makecell{Task\\performance}} & \multicolumn{2}{c}{\makecell{Action\\validity}} & \multicolumn{2}{c}{\makecell{Repeated\\actions}} & \multicolumn{2}{c}{\makecell{Task\\performance}} \\
\cmidrule(lr){2-3}\cmidrule(lr){4-5}\cmidrule(lr){6-7}\cmidrule(lr){8-9}\cmidrule(lr){10-11}\cmidrule(lr){12-13}
& $V_0$ & $\Delta V$ & $L_0$ & $\Delta L$ & $R_0$ & $\Delta R$ & $V_0$ & $\Delta V$ & $L_0$ & $\Delta L$ & $R_0$ & $\Delta R$ \\
\midrule
\endhead
\bottomrule
\endfoot
\bottomrule
\endlastfoot
\rowcolor{TableHeader}
\multicolumn{13}{c}{\textbf{ALFWorld, Seen}} \\*
{\fontsize{7.5}{9}\selectfont Off-policy: SFT} & 76.7 & $+6.5$ & 60.1 & $-35.9$ & 6.1 & $+39.2$ & 87.4 & $+5.4$ & 29.3 & $-9.6$ & 29.6 & $+54.4$ \\*
{\fontsize{7.5}{9}\selectfont Off-policy: SFT-RS} & 77.1 & $+4.0$ & 61.7 & $-29.5$ & 6.2 & $+37.6$ & 88.0 & $+7.5$ & 29.7 & $-16.1$ & 31.0 & $+58.0$ \\*
{\fontsize{7.5}{9}\selectfont On-policy: GRPO} & 71.1 & $+12.7$ & 38.5 & $-10.6$ & 20.1 & $+45.4$ & 90.1 & $-0.5$ & 22.7 & $-5.1$ & 56.0 & $+26.7$ \\*
\rowcolor{OPDLightBlue}
{\fontsize{7.5}{9}\selectfont \textbf{OPW (Ours)}} & 92.0 & $+4.5$ & 18.5 & $+1.6$ & 46.5 & $+39.9$ & 93.9 & $+0.9$ & 16.4 & $-3.5$ & 46.8 & $+49.3$ \\
\midrule
\rowcolor{TableHeader}
\multicolumn{13}{c}{\textbf{ScienceWorld, Seen}} \\*
{\fontsize{7.5}{9}\selectfont Off-policy: SFT} & 62.8 & $-39.6$ & 87.8 & $-39.2$ & 4.0 & $+29.4$ & 63.5 & $-23.4$ & 67.5 & $-31.0$ & 21.8 & $+38.7$ \\*
{\fontsize{7.5}{9}\selectfont Off-policy: SFT-RS} & 61.5 & $-19.2$ & 89.5 & $-28.4$ & 3.3 & $+25.0$ & 62.3 & $-13.1$ & 67.8 & $-36.0$ & 21.7 & $+42.9$ \\*
{\fontsize{7.5}{9}\selectfont On-policy: GRPO} & 35.0 & $-2.6$ & 79.9 & $-29.0$ & 9.0 & $+32.6$ & 49.4 & $+2.4$ & 41.7 & $-1.4$ & 49.8 & $+19.0$ \\*
\rowcolor{OPDLightBlue}
{\fontsize{7.5}{9}\selectfont \textbf{OPW (Ours)}} & 43.1 & $+3.7$ & 53.8 & $-17.8$ & 44.1 & $+27.6$ & 51.7 & $+2.3$ & 38.5 & $-11.3$ & 57.3 & $+29.9$ \\
\midrule
\rowcolor{TableHeader}
\multicolumn{13}{c}{\textbf{ScienceWorld, Unseen}} \\*
{\fontsize{7.5}{9}\selectfont Off-policy: SFT} & 61.0 & $-31.4$ & 88.3 & $-41.4$ & 8.8 & $+31.6$ & 57.1 & $-13.3$ & 62.6 & $-27.0$ & 32.2 & $+29.7$ \\*
{\fontsize{7.5}{9}\selectfont Off-policy: SFT-RS} & 59.8 & $-16.3$ & 86.6 & $-27.6$ & 8.5 & $+27.1$ & 56.4 & $-8.3$ & 59.1 & $-30.3$ & 34.2 & $+29.3$ \\*
{\fontsize{7.5}{9}\selectfont On-policy: GRPO} & 36.8 & $-0.6$ & 77.1 & $-24.0$ & 15.0 & $+30.5$ & 48.5 & $+4.2$ & 38.5 & $-2.1$ & 56.8 & $+2.8$ \\*
\rowcolor{OPDLightBlue}
{\fontsize{7.5}{9}\selectfont \textbf{OPW (Ours)}} & 42.5 & $+3.4$ & 44.8 & $-8.0$ & 51.8 & $+11.2$ & 50.3 & $+3.0$ & 36.6 & $-3.6$ & 61.6 & $+6.6$ \\
\end{longtable}
\endgroup

\subsection{Successful and Failure-Only Action Sequences}
\label{app:sequence-dynamics}

We evaluate both model sizes on 140 ALFWorld Seen and 200 ScienceWorld Unseen tasks, with eight rollouts per task after warmup and after 20, 40, and 60 GRPO steps. After lowercasing and whitespace normalization, we remove invalid actions and \texttt{look}, \texttt{look around}, and \texttt{inventory}, then merge consecutive identical actions. Object identities and order remain. We count each distinct sequence once per task, including the empty sequence. A sequence is successful if any occurrence succeeds and failure-only otherwise, even when filtering gives successful and failed trajectories the same sequence. Task averages include tasks with no successful trajectory.

Total sequence counts in Fig.~\ref{fig:all-sequence-dynamics} decrease after OPW, while successful counts in Fig.~\ref{fig:successful-sequence-dynamics} increase in both environments and model sizes. In ALFWorld 4B, the total falls from 5.18 to 4.31, successful sequences rise from 2.23 to 4.00, and failure-only sequences fall from 2.95 to 0.31.

\begin{figure}[t]
\centering
\includegraphics[width=\textwidth]{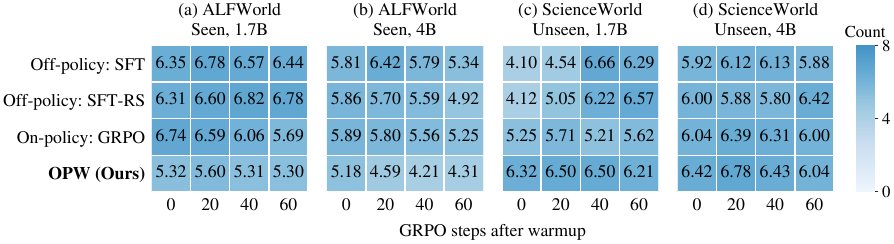}
\caption{All distinct action sequences among eight rollouts per task during GRPO.}
\label{fig:all-sequence-dynamics}
\end{figure}

To compare equal numbers of successes, we retain tasks with at least $m\in\{2,4\}$ successful trajectories under every method at all four evaluations. If a task has $S$ successes and sequence $j$ occurs $c_j$ times, its expected distinct count in $m$ successes selected without replacement is $\sum_j[1-\binom{S-c_j}{m}/\binom{S}{m}]$. Table~\ref{tab:conditional-success-sequences} averages over the $N$ retained tasks, at warmup end and step 60.

\begin{table}[htbp]
\centering\fontsize{8.5}{10}\selectfont
\caption{Distinct action sequences among equal numbers of successful trajectories.}
\label{tab:conditional-success-sequences}
\setlength{\tabcolsep}{2pt}
\renewcommand{\arraystretch}{1.12}
\begin{tabular}{@{\hspace{3pt}}>{\raggedright\arraybackslash}p{54pt}>{\centering\arraybackslash}p{22pt}>{\centering\arraybackslash}p{8pt}>{\centering\arraybackslash}p{12pt}*{3}{>{\centering\arraybackslash}p{55pt}}>{\columncolor{OPDLightBlue}\centering\arraybackslash}p{55pt}@{\hspace{3pt}}}
\toprule
\textbf{Environment} & \textbf{Size} & $m$ & $N$ & \shortstack{Off-policy:\\SFT} & \shortstack{Off-policy:\\SFT-RS} & \shortstack{On-policy:\\GRPO} & \textbf{OPW (Ours)} \\
\midrule
\textbf{ALFWorld} & 1.7B & 2 & 10 & $1.72\to1.64$ & $1.81\to1.54$ & $1.52\to1.48$ & $1.51\to1.34$ \\
 & & 4 & 1 & $3.00\to3.21$ & $2.00\to2.60$ & $1.97\to2.41$ & $1.00\to1.93$ \\
 & 4B & 2 & 49 & $1.69\to1.60$ & $1.70\to1.50$ & $1.62\to1.58$ & $1.67\to1.52$ \\
 & & 4 & 39 & $2.53\to2.46$ & $2.65\to2.08$ & $2.48\to2.36$ & $2.54\to2.14$ \\
\midrule
\textbf{ScienceWorld} & 1.7B & 2 & 1 & $1.67\to1.00$ & $1.73\to1.46$ & $1.68\to1.00$ & $1.29\to1.00$ \\
 & & 4 & 1 & $2.43\to1.00$ & $2.60\to2.00$ & $2.41\to1.00$ & $1.57\to1.00$ \\
 & 4B & 2 & 17 & $1.47\to1.57$ & $1.56\to1.59$ & $1.50\to1.60$ & $1.43\to1.57$ \\
 & & 4 & 7 & $1.98\to2.02$ & $2.03\to2.31$ & $1.52\to2.07$ & $1.83\to2.32$ \\
\bottomrule
\end{tabular}
\end{table}

For OPW 4B with $m=4$, the expected count falls from 2.54 to 2.14 in ALFWorld but rises from 1.83 to 2.32 in ScienceWorld. More successful sequences within eight rollouts can therefore accompany either less or more variation among equally many successes. All 1.7B cells except ALFWorld with $m=2$ describe a single task.

\section{Warmup Duration and Repeated Task Success}
\label{app:process-switching}

\subsection{What Continues to Improve during Warmup?}
\label{app:opd-switch-diagnostics}

Table~\ref{tab:opd-validation-progress} evaluates Qwen3-1.7B OPW with data-order seed 4 every five steps on 64 reserved ALFWorld analysis tasks, using 32 rollouts per task. These differ from the 140 Seen test tasks in the full-training comparison. Pass@8 and pass\textasciicircum{}8 use Eq.~\ref{eq:success-estimators} with $n=32$. From steps 20 to 40, validity gains slow while repeated task success continues to improve. Tasks solved at least once in 32 rollouts increase from 52 to 53, while those solved in every rollout increase from 2 to 11.

\begin{table}[htbp]
\caption{Action validity and repeated task success during OPW (ALFWorld 1.7B, $n=32$, \%).}
\label{tab:opd-validation-progress}
\par\smallskip
\centering\fontsize{8.5}{10}\selectfont
\setlength{\tabcolsep}{2pt}
\renewcommand{\arraystretch}{1.12}
\begin{tabular}{@{\hspace{3pt}}*{4}{>{\centering\arraybackslash}p{53pt}}@{\hspace{3pt}}}
\toprule
\textbf{OPW} steps & Valid actions & pass@8 & pass\textasciicircum{}8 \\
\midrule
10 & 72.9 & 39.4 & 0.4 \\
15 & 85.1 & 52.4 & 1.3 \\
\rowcolor{TableHeader}
20 & 89.4 & 69.7 & 10.8 \\
25 & 90.4 & 71.4 & 13.4 \\
30 & 91.1 & 71.2 & 14.5 \\
35 & 91.6 & 73.7 & 20.7 \\
\rowcolor{TableHeader}
40 & 91.8 & 74.8 & 22.0 \\
\bottomrule
\end{tabular}
\end{table}

For ScienceWorld Qwen3-1.7B with data-order seed 3, we evaluate 64 Seen tasks every five OPW steps from 10 to 40, with eight rollouts per task. These samples differ from initial GRPO evaluations. Fig.~\ref{fig:opd-process-action-value} compares partial-credit reward contrast with complete-task success. At both steps 15 and 40, 92.2\% of groups contain different scores, while pass@8 rises from 18.8\% to 56.2\%. Success coverage thus continues to improve when partial-credit reward differences are already common.

\begin{figure}[htbp]
\centering
\includegraphics[width=0.65\linewidth]{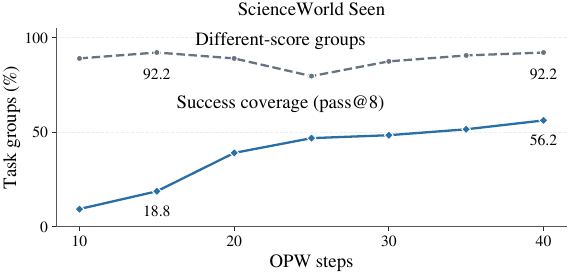}
\caption{Success coverage and partial-credit reward contrast during OPW (ScienceWorld 1.7B).}
\label{fig:opd-process-action-value}
\end{figure}

\subsection{Learning Speed and Repeated Task Success}
\label{app:duration-comparisons}
\label{app:duration-common-checkpoint}

Table~\ref{tab:duration-reliability} compares Qwen3-1.7B paths sharing a warmup run and ending at 100 total steps, evaluated on both benchmark splits with eight rollouts per task. ALFWorld branches after 20 or 40 OPW steps from the same seed-4 run. Longer warmup raises final Seen pass@1 from 88.3\% to 94.9\% and pass\textasciicircum{}8 from 69.3\% to 80.7\%. With evaluations every five steps, the first reaching 80\% success moves from total step 75 to 60. Total steps are fixed, but GPU-hours differ.

ScienceWorld branches from one seed-1 checkpoint after 40 OPW steps. Switching to GRPO immediately yields higher Unseen pass@1 (36.6\% versus 32.2\%) and pass\textasciicircum{}8 (4.5\% versus 3.5\%) than 20 more OPW steps. Mean score favors longer warmup (63.84 versus 63.06). Longer warmup improves Seen mean score (71.72 to 76.65) and pass@8 (84.4\% to 85.9\%). The preferred allocation depends on the task set and whether evaluation rewards partial progress or complete success.

\begin{table}[htbp]
\caption{Repeated task success after 100 total steps (Qwen3-1.7B, $n=8$, \%).}
\label{tab:duration-reliability}
\par\smallskip
\centering\fontsize{8.5}{10}\selectfont
\setlength{\tabcolsep}{2pt}
\renewcommand{\arraystretch}{1.12}
\begin{tabular}{@{\hspace{3pt}}>{\raggedright\arraybackslash}p{111pt}>{\centering\arraybackslash}p{46pt}*{4}{>{\centering\arraybackslash}p{42pt}}@{\hspace{3pt}}}
\toprule
\textbf{Environment / task set} & \textbf{OPW} steps & pass\textasciicircum{}1 & pass\textasciicircum{}2 & pass\textasciicircum{}4 & pass\textasciicircum{}8 \\
\midrule
\textbf{ALFWorld} / Seen & 20 & 88.3 & 83.5 & 77.2 & 69.3 \\
 & 40 & \textbf{94.9} & \textbf{91.8} & \textbf{86.9} & \textbf{80.7} \\
\midrule
\textbf{ScienceWorld} / Unseen & 40 & \textbf{36.6} & \textbf{20.1} & \textbf{8.7} & \textbf{4.5} \\
 & 60 & 32.2 & 17.8 & 8.5 & 3.5 \\
\midrule
\textbf{ScienceWorld} / Seen & 40 & 51.8 & 37.4 & 24.4 & 10.9 \\
 & 60 & \textbf{56.1} & \textbf{42.7} & \textbf{30.7} & \textbf{20.3} \\
\bottomrule
\end{tabular}
\end{table}

\noindent\textbf{Four-duration comparison across training runs.}
Fig.~\ref{fig:warmup-reliability} compares 20, 40, 60, and 80 OPW steps followed by GRPO to 100 steps, evaluated every 20 steps. ALFWorld uses seed 4 for 20/40, seed 3 for 60, and seed 1 for 80 OPW steps. ScienceWorld uses seed 3 for 20/40 and seed 1 for 60/80. Rankings include training-run differences. ScienceWorld's 40-step curve ends at 2.0\% Unseen pass\textasciicircum{}8, while the seed-1 branch in Table~\ref{tab:duration-reliability} ends at 4.5\%. The 60-step branch is shared.

Within the ScienceWorld seed-3 run, extending warmup from 20 to 40 steps raises Unseen pass@1 from 28.3\% to 34.1\% and pass\textasciicircum{}8 from 1.0\% to 2.0\%, while mean score changes little (60.94 to 60.64). Seen pass\textasciicircum{}8 also rises, from 3.1\% to 10.9\%. These comparisons support choosing duration by subsequent validation on the required outcome.

\section{Long-Trajectory Examples}
\label{app:trajectory-cases}

Two ALFWorld Unseen tasks illustrate long interactions and failure patterns. Comparisons use Qwen3-4B on the same task and the sixth evaluation trajectory. Base uses released weights, while SFT and OPW use their 40-step warmup models. Table~\ref{tab:warmup-action-composition} reports task-wide error frequencies.

The tables retain every action and quote selected environment responses, omitting reasoning and other observations. \colorbox{orange!15}{Orange} marks invalid actions, \colorbox{red!12}{red} marks repeated invalid actions, and R marks any repetition. Other actions are valid. Highlights and turn labels are our annotations.

\noindent\textbf{OPW completes the requested task while retaining useful repeated actions.}
Table~\ref{tab:case-pan} follows a task that requires cleaning a pan and placing it on a countertop. Base cleans and places a pot instead, then repeats \texttt{finish} without completing the task. OPW locates the pan and completes the task in 26 actions. Its returns to the sink and countertop serve cleaning and placement.

\begingroup
\footnotesize
\setlength{\tabcolsep}{3pt}
\renewcommand{\arraystretch}{1.06}
\setlength{\LTcapwidth}{\linewidth}
\begin{longtable}{@{}>{\columncolor{white}[0pt][\tabcolsep]\raggedleft\arraybackslash}p{12pt}p{\dimexpr.5\linewidth-39pt\relax}>{\columncolor{white}[\tabcolsep][0pt]\centering\arraybackslash}p{9pt}@{\hspace{12pt}}>{\columncolor{white}[0pt][\tabcolsep]\raggedleft\arraybackslash}p{12pt}p{\dimexpr.5\linewidth-39pt\relax}>{\columncolor{white}[\tabcolsep][0pt]\centering\arraybackslash}p{9pt}@{}}
\caption{\small Complete action sequences before and after OPW on the pan-cleaning task.}
\label{tab:case-pan} \\
\toprule
\multicolumn{6}{@{}p{\linewidth}@{}}{\textbf{Task:} ``clean some pan and put it in countertop.''\quad ALFWorld Unseen, Qwen3-4B.} \\
\multicolumn{3}{@{}>{\columncolor{TableHeader}[0pt][0pt]}c@{\hspace{12pt}}}{\textbf{Base}} & \multicolumn{3}{@{}>{\columncolor{OPDLightBlue}[0pt][0pt]}c@{}}{\textbf{OPW (Ours)}} \\
\midrule
\endfirsthead
\multicolumn{6}{c}{\tablename~\thetable\ (continued)} \\
\toprule
\multicolumn{6}{@{}p{\linewidth}@{}}{\textbf{Task:} ``clean some pan and put it in countertop.''\quad ALFWorld Unseen, Qwen3-4B.} \\
\multicolumn{3}{@{}>{\columncolor{TableHeader}[0pt][0pt]}c@{\hspace{12pt}}}{\textbf{Base}} & \multicolumn{3}{@{}>{\columncolor{OPDLightBlue}[0pt][0pt]}c@{}}{\textbf{OPW (Ours)}} \\
\midrule
\endhead
\bottomrule
\endfoot
\bottomrule
\endlastfoot
\caseaction{1}{go to cabinet 1}{} & \caseaction{1}{go to cabinet 1}{} \\
\caseaction{2}{go to cabinet 2}{} & \caseaction{2}{go to cabinet 2}{} \\
\caseaction{3}{open cabinet 2}{} & \caseaction{3}{open cabinet 2}{} \\
\caseaction{4}{go to cabinet 3}{} & \caseaction{4}{go to cabinet 3}{} \\
\caseaction{5}{go to cabinet 4}{} & \caseaction{5}{go to cabinet 4}{} \\
\caseaction{6}{go to cabinet 5}{} & \caseaction{6}{go to cabinet 5}{} \\
\caseaction{7}{open cabinet 5}{} & \caseaction{7}{open cabinet 5}{} \\
\caseaction{8}{go to cabinet 6}{} & \caseaction{8}{go to cabinet 6}{} \\
\caseaction{9}{open cabinet 6}{} & \caseaction{9}{open cabinet 6}{} \\
\caseaction{10}{go to drawer 1}{} & \caseaction{10}{go to drawer 1}{} \\
\caseaction{11}{open drawer 1}{} & \caseaction{11}{open drawer 1}{} \\
\caseaction{12}{go to drawer 2}{} & \caseaction{12}{go to drawer 2}{} \\
\caseaction{13}{open drawer 2}{} & \caseaction{13}{open drawer 2}{} \\
\caseaction{14}{go to drawer 3}{} & \caseaction{14}{go to drawer 3}{} \\
\caseaction{15}{open drawer 3}{} & \caseaction{15}{open drawer 3}{} \\
\caseaction{16}{go to shelf 1}{} & \caseaction{16}{go to countertop 1}{} \\
\caseaction{17}{go to shelf 2}{} & \caseaction{17}{go to countertop 2}{} \\
\caseaction{18}{go to shelf 3}{} & \caseaction{18}{go to countertop 3}{} \\
\caseaction{19}{go to sinkbasin 1}{} & \caseaction{19}{go to sinkbasin 1}{} \\
\caseaction{20}{go to stoveburner 1}{} & \caseaction{20}{go to stoveburner 1}{} \\
\caseaction{21}{take pot 1 from stoveburner 1}{} & \caseaction{21}{go to stoveburner 2}{} \\
\caseaction{22}{go to sinkbasin 1}{R} & \caseaction{22}{take pan 1 from stoveburner 2}{} \\
\caseaction{23}{clean pot 1 with sinkbasin 1}{} & \caseaction{23}{go to sinkbasin 1}{R} \\
\caseaction{24}{go to countertop 1}{} & \caseaction{24}{clean pan 1 with sinkbasin 1}{} \\
\caseaction{25}{move pot 1 to countertop 1}{} & \caseaction{25}{go to countertop 1}{R} \\
\caseaction[orange!15]{26}{finish}{} & \caseaction{26}{move pan 1 to countertop 1}{} \\
\caseaction[red!12]{27}{finish}{R} & \caseaction{}{}{} \\
\caseaction[red!12]{28}{finish}{R} & \caseaction{}{}{} \\
\caseaction[red!12]{29}{finish}{R} & \caseaction{}{}{} \\
\caseaction[red!12]{30}{finish}{R} & \caseaction{}{}{} \\
\midrule
\multicolumn{3}{@{}p{\dimexpr.5\linewidth-6pt\relax}@{\hspace{12pt}}}{%
\textbf{Environment feedback}
\par Turn 21: ``You pick up the pot 1 from the stoveburner 1.''
\par Turns 26--30: ``Nothing happens.''
\par\smallskip
\textbf{Outcome:} Task not completed.\par 25/30 actions valid, 4 repeated invalid actions.
} & \multicolumn{3}{@{}p{\dimexpr.5\linewidth-6pt\relax}@{}}{%
\textbf{Environment feedback}
\par Turn 22: ``You pick up the pan 1 from the stoveburner 2.''
\par Turn 24: ``You clean the pan 1 using the sinkbasin 1.''
\par Turn 26: ``You move the pan 1 to the countertop 1.''
\par\smallskip
\textbf{Outcome:} Task completed.\par 26/26 actions valid, 0 repeated invalid actions.
} \\
\end{longtable}
\endgroup

\noindent\textbf{OPW completes the task requirements after a longer search.}
In Table~\ref{tab:case-egg}, SFT finds the egg earlier and places it in the microwave without cleaning it. It issues \texttt{end} 11 times with feedback ``Nothing happens.'' OPW searches longer but cleans and places the egg in 29 actions.

\begingroup
\footnotesize
\setlength{\tabcolsep}{3pt}
\renewcommand{\arraystretch}{1.06}
\setlength{\LTcapwidth}{\linewidth}
\begin{longtable}{@{}>{\columncolor{white}[0pt][\tabcolsep]\raggedleft\arraybackslash}p{12pt}p{\dimexpr.5\linewidth-39pt\relax}>{\columncolor{white}[\tabcolsep][0pt]\centering\arraybackslash}p{9pt}@{\hspace{12pt}}>{\columncolor{white}[0pt][\tabcolsep]\raggedleft\arraybackslash}p{12pt}p{\dimexpr.5\linewidth-39pt\relax}>{\columncolor{white}[\tabcolsep][0pt]\centering\arraybackslash}p{9pt}@{}}
\caption{\small Complete action sequences after SFT and OPW on the egg-cleaning task.}
\label{tab:case-egg} \\
\toprule
\multicolumn{6}{@{}p{\linewidth}@{}}{\textbf{Task:} ``put a clean egg in microwave.''\quad ALFWorld Unseen, Qwen3-4B.} \\
\multicolumn{3}{@{}>{\columncolor{TableHeader}[0pt][0pt]}c@{\hspace{12pt}}}{\textbf{Off-policy: SFT}} & \multicolumn{3}{@{}>{\columncolor{OPDLightBlue}[0pt][0pt]}c@{}}{\textbf{OPW (Ours)}} \\
\midrule
\endfirsthead
\multicolumn{6}{c}{\tablename~\thetable\ (continued)} \\
\toprule
\multicolumn{6}{@{}p{\linewidth}@{}}{\textbf{Task:} ``put a clean egg in microwave.''\quad ALFWorld Unseen, Qwen3-4B.} \\
\multicolumn{3}{@{}>{\columncolor{TableHeader}[0pt][0pt]}c@{\hspace{12pt}}}{\textbf{Off-policy: SFT}} & \multicolumn{3}{@{}>{\columncolor{OPDLightBlue}[0pt][0pt]}c@{}}{\textbf{OPW (Ours)}} \\
\midrule
\endhead
\bottomrule
\endfoot
\bottomrule
\endlastfoot
\caseaction{1}{go to fridge 1}{} & \caseaction{1}{go to fridge 1}{} \\
\caseaction{2}{open fridge 1}{} & \caseaction{2}{open fridge 1}{} \\
\caseaction{3}{go to cabinet 1}{} & \caseaction{3}{go to cabinet 1}{} \\
\caseaction{4}{go to cabinet 2}{} & \caseaction{4}{go to cabinet 2}{} \\
\caseaction{5}{open cabinet 2}{} & \caseaction{5}{open cabinet 2}{} \\
\caseaction{6}{go to drawer 1}{} & \caseaction{6}{go to cabinet 3}{} \\
\caseaction{7}{open drawer 1}{} & \caseaction{7}{go to cabinet 4}{} \\
\caseaction{8}{go to shelf 1}{} & \caseaction{8}{go to cabinet 5}{} \\
\caseaction{9}{go to shelf 2}{} & \caseaction{9}{open cabinet 5}{} \\
\caseaction{10}{go to shelf 3}{} & \caseaction{10}{go to cabinet 6}{} \\
\caseaction{11}{go to countertop 1}{} & \caseaction{11}{open cabinet 6}{} \\
\caseaction{12}{go to countertop 2}{} & \caseaction{12}{go to drawer 1}{} \\
\caseaction{13}{go to countertop 3}{} & \caseaction{13}{open drawer 1}{} \\
\caseaction{14}{take egg 1 from countertop 3}{} & \caseaction{14}{go to drawer 2}{} \\
\caseaction{15}{examine egg 1}{} & \caseaction{15}{open drawer 2}{} \\
\caseaction{16}{go to microwave 1}{} & \caseaction{16}{go to drawer 3}{} \\
\caseaction{17}{open microwave 1}{} & \caseaction{17}{open drawer 3}{} \\
\caseaction{18}{move egg 1 to microwave 1}{} & \caseaction{18}{go to shelf 1}{} \\
\caseaction{19}{close microwave 1}{} & \caseaction{19}{go to shelf 2}{} \\
\caseaction[orange!15]{20}{end}{} & \caseaction{20}{go to shelf 3}{} \\
\caseaction[red!12]{21}{end}{R} & \caseaction{21}{go to countertop 1}{} \\
\caseaction[red!12]{22}{end}{R} & \caseaction{22}{go to countertop 2}{} \\
\caseaction[red!12]{23}{end}{R} & \caseaction{23}{go to countertop 3}{} \\
\caseaction[red!12]{24}{end}{R} & \caseaction{24}{take egg 1 from countertop 3}{} \\
\caseaction[red!12]{25}{end}{R} & \caseaction{25}{go to sinkbasin 1}{} \\
\caseaction[red!12]{26}{end}{R} & \caseaction{26}{clean egg 1 with sinkbasin 1}{} \\
\caseaction[red!12]{27}{end}{R} & \caseaction{27}{go to microwave 1}{} \\
\caseaction[red!12]{28}{end}{R} & \caseaction{28}{open microwave 1}{} \\
\caseaction[red!12]{29}{end}{R} & \caseaction{29}{move egg 1 to microwave 1}{} \\
\caseaction[red!12]{30}{end}{R} & \caseaction{}{}{} \\
\midrule
\multicolumn{3}{@{}p{\dimexpr.5\linewidth-6pt\relax}@{\hspace{12pt}}}{%
\textbf{Environment feedback}
\par Turn 18: ``You move the egg 1 to the microwave 1.''
\par Turns 20--30: ``Nothing happens.''
\par\smallskip
\textbf{Outcome:} Task not completed.\par 19/30 actions valid, 10 repeated invalid actions.
} & \multicolumn{3}{@{}p{\dimexpr.5\linewidth-6pt\relax}@{}}{%
\textbf{Environment feedback}
\par Turn 26: ``You clean the egg 1 using the sinkbasin 1.''
\par Turn 29: ``You move the egg 1 to the microwave 1.''
\par\smallskip
\textbf{Outcome:} Task completed.\par 29/29 actions valid, 0 repeated invalid actions.
} \\
\end{longtable}
\endgroup

\section{Limitations and Scope}
\label{app:limitations}

Our experiments use ALFWorld and ScienceWorld as controlled testbeds for studying how OPW affects subsequent RLVR in multi-step interactive tasks. Extending this investigation to more diverse interaction settings is a natural direction for future work. Our theoretical analysis characterizes initial reward discovery under the stated assumptions, complementing the empirical evaluation of subsequent learning dynamics. In practice, the appropriate warmup duration and end-to-end training efficiency depend on the task, model configuration, and cost of teacher supervision. We examine multiple warmup durations and include teacher scoring in the recorded training cost, distinguishing faster subsequent learning from overall computational savings. Adaptive criteria for transitioning from warmup to RLVR could further improve the allocation of training resources.

\end{document}